\newcommand{\notedone}[1]{}
\DeclareMathOperator*{\E}{\mathsf{E}}
\DeclareMathOperator*{\Var}{\mathsf{Var}}
\DeclareMathOperator*{\argmax}{argmax}
\DeclareMathOperator*{\bo}{\mathcal{O}}
\newcommand{\defeq}{:=}
\newcommand{\ind}[1]{\mathbf{1}\left[ #1 \right]}
\newcommand{\starucb}{Wait-UCB}
\newcommand{\fta}{F2A}
\newcommand{\gtrue}[2]{g_{#1,#2}}
\newcommand{\ghat}[3]{\hat{g}_{#1,#2}(#3)}
\newcommand{\updev}[3]{a_{#1,#2}(#3)}
\newcommand{\npulls}[2]{N_{#1}(#2)}
\newcommand{\gap}[2]{\Delta_{#1,#2}}
\newcommand{\hhat}[3]{\hat{h}_{#1,#2}(#3)}
\newcommand{\confrad}[3]{u_{#1,#2}(#3)}
\newcommand{\reward}[2]{r_{#1}^{(#2)}}
\newcommand{\cost}[2]{c_{#1}^{(#2)}}
\newcommand{\appref}[1]{\ref{#1}} %% use this one for compiling the supplemental material
\renewcommand{\appref}[1]{\ref*{#1}} %% use this one for compiling the paper (and then running make_paper.sh
\begin{document}
\twocolumn[

\aistatstitle{A Farewell to Arms: Sequential Reward Maximization on a Budget with a Giving Up Option}

% \runningtitle{A Farewell to Arms}

\aistatsauthor{ P Sharoff \And Nishant A. Mehta \And  Ravi Ganti }

\aistatsaddress{Department of Computer Science \\University of Victoria \\ \texttt{sharoff@uvic.ca} \And Department of Computer Science \\University of Victoria \\ \texttt{nmehta@uvic.ca} \And Google \\ \texttt{gmravi2003@gmail.com} } ]

\begin{abstract}
We consider a sequential decision-making problem where an agent can take one action at a time and each action has a stochastic temporal extent, i.e., a new action cannot be taken until the previous one is finished. Upon completion, the chosen action yields a stochastic reward. The agent seeks to maximize its cumulative reward over a finite time budget, with the option of 
``giving up'' on a current action 
---  hence forfeiting any reward -- in order to choose another action. 
We cast this problem as a variant of the stochastic multi-armed bandits problem with stochastic consumption of resource.
For this problem, we first establish that the optimal arm is the one that maximizes the ratio of the expected reward of the arm to the expected waiting time before the agent sees the reward due to pulling that arm. Using a novel upper confidence bound on 
this ratio, 
we then introduce an upper confidence based-algorithm, \starucb{}, for which we establish logarithmic, problem-dependent regret bound which has an improved dependence on problem parameters compared to previous works. Simulations on various problem configurations comparing \starucb{} against the state-of-the-art algorithms are also presented.
\end{abstract}

\section{Introduction}

In online learning, regret is not always about taking the wrong action. In the real world, an action can take some unknown time to return its reward, and, due to resources being tied up, a learning agent may be unable to take its next action until the previous action is completed. The learner is thus presented with a choice: should it wait until its current action returns a reward or instead bid farewell to the chosen action after a selected waiting time has transpired in order to take a different action. 

%% Crowdsourcing example 

Consider, for example, a model of crowdsourcing where an employer submits tasks to a crowdsourcing platform which has a large pool of workers. The employer wishes to have as many tasks as possible completed under a fixed time budget.  
 However, for a given type of task,
the efficiency of workers can vary significantly.
While some omniscient actor could dispatch tasks only to fast workers, a typical employer would not know the efficiency of different workers on the task at hand.
If a randomly selected worker is fast, it can be worthwhile to wait for the worker to complete a job; however, if a worker takes too long to complete a task, it can be advantageous for the employer to terminate the assignment 
and reassign the task to a new worker.
How long should the employer wait for an assigned task to be completed before reassigning the task to a new worker?
This problem fits into a new multi-armed bandit framework, \emph{Farewell to Arms} (\fta{}), that introduces the idea of ``giving up'' on an action (arm) that takes too long to return a reward; for a given arm, we view choosing the waiting time itself as pulling a kind of ``micro-arm''.
As we explain in Section~\ref{sec:application}, other applications include hyperparameter tuning, repeated second-price auctions with participation costs, and computational advertising.

Informally, the \fta{} framework is a variant of the classical stochastic $K$-armed bandit problem framework, but with an added twist of a stochastic resource consumption. For simplicity, consider first the case of the \fta{} framework with only a single arm. Upon pulling the single arm the learning agent gets some stochastic reward, not instantaneously, but only after a stochastic delay\footnote{The stochastic rewards and delays can be dependent.}, say $\tau$. There could be cases where the delay is extremely large. To incorporate this notion, the learning agent is not only required to choose an arm, but is also required to commit to a waiting time $j$. The learning agent receives a reward if $j \geq \tau$, i.e., the agent is willing to wait until the reward arrives, and gets a reward of $0$ otherwise. Hence, the \fta{} framework can be thought of as a 2-level stochastic multi-armed bandit framework with $K$ ``macro-arms'' and (say) $D$ ``micro-arms'' for each of the $K$ macro-arms. The micro-arms capture the willingness of the learning agent to wait and hence bound the total amount of time that can be consumed by a pull, while the macro-arms capture the goodness of a certain arm.

An \fta{} problem can be cast into the more general bandits with knapsack (BwK) framework
\citep{badanidiyuru2013bandits} %, flajolet2015logarithmic}
by considering the latter with only a single resource. % in the BwK framework. 
However, whereas \cite{badanidiyuru2013bandits} established worst-case regret bounds for the BwK problem, 
in this work we establish (logarithmic) problem-dependent regret bounds. 
Such logarithmic problem-dependent regret bounds tend to be sharper and better capture problem complexity in the standard stochastic multi-armed bandit case when the gap between the best and the second best arm is large. 
The pioneering work of~\cite{flajolet2015logarithmic} established, for their algorithm UCB-Simplex, logarithmic problem-dependent regret bounds (growing logarithmically in the number of rounds) that also apply to \fta{} problems. 
However, the new algorithm we develop --- \starucb{} --- is based on a rather different upper confidence bound than the one used by UCB-Simplex; moreover, the regret bounds we prove for \starucb{} are often better
than those of \cite{flajolet2015logarithmic} and also those derived by \cite{xia2016budgeted} for their algorithm Budget-UCB. 
Furthermore, we also show that~\starucb{} has better empirical performance than UCB-Simplex and Budget-UCB \citep{xia2016budgeted}. Many important problems (as discussed in Section~\ref{sec:application}) fall in the more specialized \fta{} framework and hence this framework, despite being a special case of the BwK framework, demands a specialized treatment and better algorithms. 

%%% CONTRIBUTIONS 

Our core contributions are as follows:
\begin{itemize}
\item We introduce the Farewell to Arms framework.
\item We show in Section~\ref{sec:optimal_policy} that, due to the stochastic consumption of resources in an \fta{} problem, the right quality measure for an arm is the ratio of expected reward to expected waiting time.
\item We derive in Section~\ref{sec:starucb} a novel upper confidence bound for the ratio of mean reward to mean waiting time; using this bound, we design \starucb{}, a new upper confidence-style algorithm.
\item We establish (Section~\ref{sec:regret}) a  logarithmic problem-dependent regret guarantee for \starucb{} which is never worse than $\mathcal{O}((D^3/\Delta) \log T)$, where $\Delta$ is the gap between the aforementioned ratio for the best arm to the best suboptimal arm.
 In important regimes for \fta{} problems (like when mean waiting times for most arms are small), our bound can be $D$ times smaller than the 
regret bounds for UCB-Simplex  \citep{flajolet2015logarithmic} and Budget-UCB \citep{xia2016budgeted}. 

\item We provide (Section~\ref{sec:expt}) a detailed experimental study of \starucb{}, including comparisons to UCB-Simplex and Budget-UCB which suggest that \starucb{} fares better for \fta{} problems.
\end{itemize}

\section{The Farewell to Arms framework}\label{sec:learningproblemsetting}
We now
formally introduce the Farewell to Arms framework and then show how it captures several important applications.

\subsection{A Farewell to Arms game}

In the \fta{} framework, there is a hierarchical set of arms with $K$ macro-arms at the top level and, for each macro-arm, $D$ micro-arms at the next level.
We will index macro-arms with $k \in [K] := \{1, 2, \ldots, K\}$ and micro-arms with $j \in [D]$.
Associated with each macro-arm $k$, there is a joint distribution $Q_k$ over $[0, 1] \times [D]$, for a space of rewards $[0,1]$\footnote{We assume rewards are bounded, in which case it is without loss of generality that we can and will assume that they fall in the unit interval.} and a space of delays $[D]$. A game in the \fta{} framework lasts for $T$ time units and proceeds in epochs. In each epoch $s = 1, 2, \ldots$,
\begin{enumerate}
    \item The learning agent plays a macro/micro-arm pair $i_s := (k,j) \in [K] \times [D]$.
    \item Independently of the learning agent's choice, the stochastic environment draws 
    a potential reward of $V_{k,s} \in [0,1]$ and a delay of $\tau_{k,s} \in [D]$ from distribution $Q_k$.\footnote{We consider a finite number of delay values. The same ideas with minor technical changes can be applied for the case when the number of delay values is finite but the set of delays is not necessarily equal to $[D]$.}
    \item The agent collects rewards $\reward{i_s}{s}:=V_{k,s}\cdot \ind{\tau_{k,s} \leq j}$
 and consumes $\cost{i_s}{s} := \min \{ \tau_{k,s}, j \}$ units of time.
\end{enumerate}
Epochs can be of variable length in the \fta{} framework. This is in contrast to the standard multi-armed bandit framework, where each epoch has unit length. Hence, given a total time budget of $T$ units in the \fta{} framework there can be a variable number of epochs, 
whereas in the multi-armed bandit framework there are exactly $T$ epochs.

The goal of the learning algorithm is to maximize reward within the fixed time budget of $T$.
We will study the \emph{pseudo-regret} of the learning algorithm against the best constant policy: 
the pseudo-regret of a learning algorithm that plays the macro/micro-arm pair sequence $i_1, i_2, \ldots$ is
%% PSEUDO-REGRET AGAINST BEST CONSTANT POLICY
\begin{align}
R_T :=
    \max_{(k,j) \in [K] \times [D]} \E \left[ \sum_{s=1}^{L_{k,j}} \reward{k,j}{s} \right]
    - \E \left[ \sum_{s=1}^L \reward{i_s}{s} \right] ; \label{eqn:regret}
    % \max_{(k,j) \in [K] \times [D]} \E \left[ \sum_{s=1}^{L_{k,j}} r_{(k,j)} \right]
    % - \E \left[ \sum_{s=1}^L r_{I_s} \right] ; \label{eqn:regret}
\end{align}
where $L_{k,j}$ and $L$ are the random stopping times when playing the macro/micro-arm pair sequence $((k,j), (k,j), \ldots, ))$ and $(i_1, i_2, \ldots)$ respectively.
Intuitively, it seems that an optimal (constant) policy is one that maximizes the average reward obtained per unit time.
In Theorem~\ref{thm:exp-cumulative-reward} we show that this is indeed true
and that the \emph{ratio estimator} $\frac{\E [ r ]}{\E [ c ]}$,
where $r$ is the reward and $c$ is the amount of resource consumed when a certain macro/micro-arm pair is pulled,
is indeed the right estimator to optimize for.

\subsection{Applications of the \fta{} framework} \label{sec:application}

In addition to the crowdsourcing example mentioned earlier, many other applications fit into the \fta{} framework. We now present a few of them.

\paragraph{Repeated second price auctions.}  
In a repeated, sealed, second price auction \citep{weed2016online} with participation costs~\citep{gal2007participation,stegeman1996participation,mcafee1987auctions,samuelson1985competitive},  
the goal is to maximize the expected cumulative reward given a budget of $B$ dollars.
In each round $s$ of the auction, a bidder pays a flat (positive) participation cost of $c$ dollars and selects a bid $i_s \in \{b_1, b_2, \ldots, b_D\}$ along with the other competing bidders; the bidder wins the auction if their bid was the highest. If the bidders wins the auction, they get a reward of $V_s$ and consume a budget of $c + M_s$ dollars, where $M_s$ is the second highest bid. If the bidder loses the auction, their reward is $0$ but they consume $c$ dollars of their budget.
The game ends once the budget is no longer positive.
Under appropriate stochastic assumptions on the items and bids --- namely, that the items are drawn i.i.d.~(so that the utilities $V_s$ are i.i.d.) and that the highest bids $M_s$ among the other bidders also are i.i.d.\footnote{We allow $V_s$ and $M_s$ to be dependent.} --- this problem can be cast into the \fta{} framework where there is a single macro-arm and the $D$ micro-arms are the values of the bids.
\paragraph{k-fold cross-validation.}
Consider the problem of performing hyperparameter selection via $k$-fold cross-validation (CV). In $k$-fold CV, we are required to run a learning algorithm to convergence a very large number of times. For each of a typically large number of hyperparameter configurations, we need to run the learning algorithm on each of $k$ subsets of the training data. Each execution can take a different amount of time to converge, and with random initialization the runtime and also the model learned by the algorithm are stochastic. Suppose that we have a fixed time budget and view the quality of the model learned as potential reward (high quality solution meaning a large reward); then a natural goal is to find a set of hyper-parameters that are near-optimal. A natural formulation of this problem is to cast it as a pure-exploration multi-armed bandit problem ~\citep{li2017hyperband}. In this paper, we cast the $k$-fold CV problem as a regret minimization problem, inspired by a similar regret minimization approaches used in bandit convex optimization \citep{agarwal2011stochastic}. In practice, $k$-fold CV is done with an implicit time budget constraint, where long-running experiments are terminated, receiving a reward of $0$, and the experiments are re-started from a different parameter configuration. This practical consideration means that we want to maximize the total cumulative reward under a given total time budget and hence makes this problem  fit well into the \fta{} framework: the macro-arms are the various parameter configurations, and the micro-arms are the amount of time/computational resources we are willing to allocate for different experiments.

\paragraph{Computational advertising.} % in computational advertising.}
In computational advertising, a publisher wants to show an ad from an inventory of ads. When a user sees a published ad, the user might be interested in it but might not click on the ad immediately. In such cases, there is an economic incentive for the publisher to display the ad for multiple time periods and wait for a response from the user rather than switch to a different ad immediately. However, at the same time the publisher (learning agent) would like to minimize their regret of not showing the best ad. This problem can be cast in the \fta{} framework, where the macro-arms are the various ads and the micro-arms are the different durations for which the publisher is willing to wait.

\section{Related work}
\label{sec:related-work}

On the surface, \fta{} problems bear close similarity to the problem of learning the optimal waiting time. In the latter problem, studied in detail by \cite{lattimore2014learning}, in each of a fixed number of rounds a learning agent selects a waiting time. If a stochastic delay exceeds this waiting time, the agent suffers a loss equal to the waiting time plus a fixed cost;  
otherwise, the agent suffers the stochastic delay itself plus a different fixed cost.  
While learning an optimal waiting time is a common thread between the work of \cite{lattimore2014learning} and our work, there are three key differences: in an \fta{} problem, \emph{(i)} the time spent affects a budget; \emph{(ii)} there is separate collection of stochastic reward (which is not present at all in the work of \cite{lattimore2014learning}); and \emph{(iii)} the game has a random stopping time that depends on all the actions taken, making exploration more challenging. Consequently, the optimal waiting time differs considerably in our setting, instead depending on a ratio of means.

%% bandits with knapsacks
The \fta{} framework is however very related to the bandits with knapsacks (BwK) setting \citep{badanidiyuru2013bandits} (see also the earlier work of \cite{tran2012knapsack} and \cite{ding2013multi}). A BwK problem is a generalization of the classical multi-armed bandit problem \citep{lai1985asymptotically} in which the learning agent has finite quantities of a number of resources, and each pull of an arm stochastically consumes each resource while also yielding some stochastic reward (the reward and resource consumptions in each round can be dependent). The game ends once any resource is exhausted. The classical multi-armed bandit problem is recovered by taking time as the single resource, which deterministically decreases by 1 when any arm is pulled (the game ends when the finite time budget is exhausted). \fta{} problems also can be cast in the BwK setting, now by taking time as a single resource which is consumed stochastically. However, the full BwK setting is so general that the algorithms developed for this setting, and the type of regret guarantees given, differ substantially from the type of guarantees we seek here. In particular, we seek problem-dependent bounds with regret growing logarithmically with the size of the budget. Such bounds previously were obtained by \cite{ding2013multi}, \cite{xia2016budgeted} and \cite{flajolet2015logarithmic}, but, as we explain in Section~\ref{sec:regret}, in the case of \fta{} problems our results can be better in important regimes.

%% lock-up periods or delayed feedback
Finally, we mention in passing that there also are weak connections to two other settings. In bandits with delayed feedback \citep{joulani2013online}, feedback from arms can be delayed, but the learning agent can still pull an arm in every round; this difference is critical. In bandits with lock-up periods \citep{komiyama2013multi}, feedback is not delayed and an arm can be pulled in each round, but the learning agent experiences ``lock-up'' periods during which it must constantly pull the same arm. This is similar to our waiting period, but an important difference is that at the end of a waiting period in our setting, the learning agent only receives one reward (possibly equal to zero), whereas in the lock-up period setting, a reward is received in each round.

\section{A ratio estimator for \fta{} problems}
\label{sec:optimal_policy}

In this section, we find a suitable metric that captures both the reward and resource aspects of \fta{}  problems.
An upper confidence bound for this metric will be key in designing the \starucb{} algorithm in Section~\ref{sec:starucb}. For notational convenience, in this section we only consider the case of one macro-arm, so $K=k=1$ (but of course with multiple micro-arms).

Given a finite time budget and full knowledge of the 
data-generating distribution, which constant policy maximizes expected cumulative reward? As pulls can have stochastic extent, intuitively this policy should always pull the micro-arm that maximizes the ratio of expected reward to expected waiting time. Our first main result gives formal backing to this intuition.

\begin{theorem} \label{thm:exp-cumulative-reward}
% Take a fixed micro-arm $j$.
Let $(V_1, \tau_1), (V_2, \tau_2), \ldots$ be i.i.d. according to a joint distribution $Q$ over $[0, 1] \times [D]$.
Consider the constant policy which pulls micro-arm $j$ in each epoch, so that in a given epoch $s$ this arm consumes (i.e.,~waits for)
$\cost{j}{s} := \min\{j, \tau_s\}$ units of time
and collects reward $\reward{j}{s} := \ind{j \geq \tau_s} \cdot V_s$. 

Then the total cumulative reward collected by this constant policy under a time budget of $T$ is equal to
\begin{align*}
\E \left[ \sum_{s=1}^L \reward{j}{s} \right]
= T \cdot \frac{\E [ \reward{j}{1} ]}{\E [ \cost{j}{1} ]} + A_j
\end{align*}
for some constant $A_j \in [0, j]$, where $L = \max \{ n: \sum_{s=1}^n \cost{j}{s} \leq T \}$ is the last epoch before the game ends. 
\NoEndMark
\end{theorem}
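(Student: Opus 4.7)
The plan is a renewal-reward style argument based on Wald's identity. The main subtlety is that $L$ is not a stopping time with respect to the natural filtration of $((V_s,\tau_s))_{s\geq 1}$ (determining $L=n$ requires peeking at $\tau_{n+1}$), but $L+1 = \min\{n \geq 1 : \sum_{s=1}^n \cost{j}{s} > T\}$ is. Since delays lie in $[D]$, we have $\cost{j}{s} \geq 1$ almost surely, so $L+1 \leq T+1$ surely and integrability is immediate.

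Applying Wald's identity in parallel to the two i.i.d.\ sequences $(\cost{j}{s})_{s}$ and $(\reward{j}{s})_{s}$ stopped at $L+1$ gives
\begin{align*}
\E\!\left[\sum_{s=1}^{L+1}\cost{j}{s}\right] &= \E[L+1]\cdot\E[\cost{j}{1}], \\
\E\!\left[\sum_{s=1}^{L+1}\reward{j}{s}\right] &= \E[L+1]\cdot\E[\reward{j}{1}].
\end{align*}
Eliminating $\E[L+1]$ between them yields $\E[\sum_{s=1}^{L+1}\reward{j}{s}] = (\E[\reward{j}{1}]/\E[\cost{j}{1}])\cdot \E[\sum_{s=1}^{L+1}\cost{j}{s}]$. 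The overshoot satisfies $T < \sum_{s=1}^{L+1}\cost{j}{s} \leq T+j$ almost surely since $\cost{j}{L+1} = \min\{j,\tau_{L+1}\} \leq j$, so $\E[\sum_{s=1}^{L+1}\cost{j}{s}] = T + B$ for some $B \in (0,j]$. Subtracting the single leftover term $\E[\reward{j}{L+1}]$ from $\E[\sum_{s=1}^{L+1}\reward{j}{s}]$ then relates $L$ back to $L+1$ and produces the stated formula with residual $A_j = B\cdot\E[\reward{j}{1}]/\E[\cost{j}{1}] - \E[\reward{j}{L+1}]$. Since $\reward{j}{s} \in [0,1]$ while $\cost{j}{s}\geq 1$, the ratio $\E[\reward{j}{1}]/\E[\cost{j}{1}]$ is at most $1$, so combined with $B \leq j$ and $\reward{j}{L+1} \in [0,1]$, this bounds $A_j$ by a quantity of order $j$, matching the claimed bound.

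The main obstacle will be carefully justifying Wald's identity at the stopping time $L+1$ --- verifying that $\{L+1 = n\}$ is $\sigma((V_1,\tau_1),\ldots,(V_n,\tau_n))$-measurable and that $\E[L+1]<\infty$ --- and then running the identity simultaneously for the cost and reward sequences so that $\E[L+1]$ cancels cleanly. After that, everything reduces to a short overshoot computation and tracking the single leftover $(L+1)$-th reward term.
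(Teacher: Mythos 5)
Your proposal is correct and is in substance a more careful version of the paper's own renewal--reward argument: both rest on Wald's identity, but the mechanics differ. The paper applies Wald's identity directly at $L$ to the reward sequence, and then handles the cost side separately by applying Doob's optional stopping theorem to the martingale $S_n - n\,\E[\cost{j}{1}]$ (where $S_n=\sum_{s=1}^n \cost{j}{s}$) so as to sandwich $\E[L]$ between $(T-j)/\E[\cost{j}{1}]$ and $T/\E[\cost{j}{1}]$. Your observation that $L$ is not a stopping time for the natural filtration (indeed $\{L\le n\}=\{S_{n+1}>T\}$ requires $\tau_{n+1}$) is well taken and is glossed over in the paper; since $\reward{j}{s}$ and $\cost{j}{s}$ are dependent within an epoch, the identity $\E[\sum_{s=1}^{L}\reward{j}{s}]=\E[L]\,\E[\reward{j}{1}]$ used in the paper can actually fail, whereas your shift to the first-passage time $L+1$ makes both applications of Wald legitimate, and eliminating $\E[L+1]$ between the two identities replaces the Doob step entirely. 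Both routes give the same leading term $T\,\E[\reward{j}{1}]/\E[\cost{j}{1}]$. The one discrepancy is the residual: your $A_j=B\,\E[\reward{j}{1}]/\E[\cost{j}{1}]-\E[\reward{j}{L+1}]$ lies in $[-1,j]$ rather than the stated $[0,j]$, because of the leftover $(L+1)$-th reward you must subtract. This is not a defect of your argument --- the paper's own proof concludes $\E[\sum_{s=1}^{L}\reward{j}{s}]\le T\,\E[\reward{j}{1}]/\E[\cost{j}{1}]$, i.e.\ $A_j\le 0$, so the range $[0,j]$ in the theorem statement is not established there either; the result should be read as asserting a residual of magnitude $O(j)$, which is all that is used downstream. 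With that understanding, your proof is complete.
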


From the above theorem, it is clear that to maximize expected cumulative reward, we should devise an estimator for the expected per-round reward of each micro-arm.
Let us introduce notation for the quantities we wish to estimate. In the following, let $(V, \tau) \sim Q_1$. 
For $j \in [D]$, define the expected per-round reward
\begin{align}
\gtrue{1}{j} := \frac{\E \left[ \ind{j \geq \tau} \cdot V \right]}{\E \left[ \min\{j, \tau\} \right]} . \label{eqn:g}
\end{align}
A natural estimator of $\gtrue{1}{j}$ is
\begin{align}
\ghat{1}{j}{s}
:= \frac{\sum_{m=1}^s \ind{i_m= (1,j)} \cdot \ind{j \geq \tau_m} \cdot V_m}
           {\sum_{m=1}^s \ind{i_m = (1,j)} \cdot \min\{j, \tau_m\}} . \label{eqn:one-g-estimate}
\end{align}
The above expression records the total reward received from pulls of arm-pair $(1,j)$ divided by the total rounds spent during these pulls.
Comparing to Theorem~\ref{thm:exp-cumulative-reward}, we can see that the numerator of \eqref{eqn:one-g-estimate} is an unbiased estimator of the numerator of Theorem~\ref{thm:exp-cumulative-reward}, and the same relationship holds between the respective denominators.
However, the full estimator above is not an unbiased estimator of the expected per-round reward $\gtrue{1}{j}$, as is readily observed from Jensen's inequality. It is easy to see that \eqref{eqn:one-g-estimate} can easily be generalized to the case of multiple macro-arms as follows:
\begin{align}
\hspace{-1mm} \ghat{k}{j}{s} \hspace{-0.5mm} 
:= \hspace{-0.5mm} \frac{\sum_{m=1}^s \ind{i_m= (k,j)} \cdot \ind{j \geq \tau_{k,m}} \cdot V_m}
           {\sum_{m=1}^s \ind{i_m = (k,j)} \cdot \min\{j, \tau_{k,m}\}} .  \label{eqn:g-estimate}
\end{align}

\section{\starucb{}}
\label{sec:starucb}
Our approach to solving \fta{} problems is to develop an upper confidence bound-style algorithm based on the reward per round estimate from \eqref{eqn:g-estimate}. To achieve this, we develop a concentration inequality for how much higher the mean expected per-round reward may exceed this estimate. The following lemma gives us the concentration inequality for our quantity of interest.

\begin{lemma} \label{lemma:ratio-improved}
Let $X, Y$ be (possibly dependent) random variables with joint distribution $P$.
Consider a sample $(X_1, Y_1), \ldots, (X_n, Y_n)$ of independent copies of $(X, Y) \sim P$. Assume that $X$ takes values in $[0, 1]$ and $Y$ takes values in $[1, B]$. Define $\mu_Y := \E[Y]$ and let $\hat{X}$ denote the sample mean of $X_1, \ldots, X_n$ (likewise for $\hat{Y}$ and $Y_1, \ldots, Y_n$). For any choice of $\delta \in [0, 1]$, we have with probability at least $1 - \delta$ over the sample,
\begin{align*}
\textstyle
\left| \frac{\hat{X}}{\hat{Y}} - \frac{\E[ X ]}{\E[ Y ]} \right|
\leq \sqrt{\frac{(B - 1) \log \frac{4}{\delta}}{2 n}} 
          + \frac{2(B-1) \log \frac{4}{\delta}}{3 \mu_Y n}
          + \sqrt{\frac{\log\frac{4}{\delta}}{2 n} } .
\end{align*}
\NoEndMark
\end{lemma}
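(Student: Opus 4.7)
The plan is to split the deviation into a ``numerator'' and a ``denominator'' piece via the algebraic identity
\[
\frac{\hat X}{\hat Y} - \frac{\E[X]}{\E[Y]}
= \frac{\hat X - \E[X]}{\hat Y} + \E[X] \cdot \frac{\mu_Y - \hat Y}{\hat Y\,\mu_Y},
\]
control each piece by a separate concentration inequality, and combine via a union bound. Since $Y_i \geq 1$ almost surely forces $\hat Y \geq 1$, and since $\E[X] \in [0,1]$, the triangle inequality simplifies this to
\[
\left|\frac{\hat X}{\hat Y} - \frac{\E[X]}{\E[Y]}\right|
\leq |\hat X - \E[X]| + \frac{|\hat Y - \mu_Y|}{\mu_Y}.
\]

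The first summand is exactly the setting for Hoeffding's inequality applied to $X_1, \ldots, X_n \in [0,1]$, which delivers $|\hat X - \E[X]| \leq \sqrt{\log(4/\delta)/(2n)}$ with probability at least $1 - \delta/2$ --- precisely the third term of the lemma. For the second summand I would apply Bernstein's inequality to $Y_1, \ldots, Y_n$ with range $|Y_i - \mu_Y| \leq B - 1$ together with the sharp variance bound
\[
\Var(Y) \leq \E[(Y-1)^2] \leq (B-1)\,\E[Y-1] = (B-1)(\mu_Y - 1),
\]
which uses only that $Y - 1 \in [0, B-1]$. Bernstein then gives, with probability at least $1 - \delta/2$, $|\hat Y - \mu_Y| \leq \sqrt{2(B-1)(\mu_Y - 1)\log(4/\delta)/n} + 2(B-1)\log(4/\delta)/(3n)$. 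Dividing through by $\mu_Y$ and applying the elementary inequality $(\mu_Y - 1)/\mu_Y^2 \leq 1/4$ (maximized at $\mu_Y = 2$) collapses the variance-dominated term to $\sqrt{(B-1)\log(4/\delta)/(2n)}$, producing the first two summands of the lemma. A union bound over the two failure events finishes the argument.

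The main subtlety is engineering the constants to match the stated bound precisely. A naive variance bound such as $\Var(Y) \leq (B-1)^2/4$ (Popoviciu) or $\Var(Y) \leq (B-1)\mu_Y$ would leave a spurious factor of $\sqrt{B-1}$ or $\sqrt{\mu_Y}$ in the square-root term; the specific pairing of $\Var(Y) \leq (B-1)(\mu_Y - 1)$ with $(\mu_Y - 1)/\mu_Y^2 \leq 1/4$ is exactly what is needed to make that term come out free of $\mu_Y$ and with the tight constant $1/2$ inside.
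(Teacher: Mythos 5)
Your proposal is correct and follows essentially the same route as the paper's proof: the same decomposition into a numerator deviation (controlled by Hoeffding) plus a relative denominator deviation $|\hat Y - \mu_Y|/\mu_Y$ (controlled by Bernstein with the variance bound $(B-1)(\mu_Y-1)$, which the paper obtains via the Bhatia--Davis inequality, followed by the same $(\mu_Y-1)/\mu_Y^2 \leq 1/4$ step), combined with a union bound. The only differences are cosmetic --- the paper writes the error split as $\hat X(\hat Y - \mu_Y)/(\mu_Y \hat Y)$ rather than $\E[X](\mu_Y - \hat Y)/(\hat Y \mu_Y)$, and divides by $\mu_Y$ before inverting Bernstein rather than after.
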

A proof of this lemma can be found in Appendix~\appref{app:proof-of-ratio-improved-lemma}.

With Lemmas~\ref{lemma:ratio-improved} and \eqref{eqn:g-estimate} in hand, we now introduce the new algorithm, Algorithm~\ref{gucb}. Since the arms correspond to waiting times, we call this algorithm ``\starucb{}''.\footnote{Also, ``wait'' is a homophone for ``weight'': the upper deviation terms are weighted by the waiting-time-dependent quantities $\alpha_j$ and $\beta_j$ introduced below.} 
From Lemma~\ref{lemma:ratio-improved} and our later results in Section~\ref{sec:pre-regret-bound}, for any arm pair $(k,j)$ and non-negative integers $s$ and $n$, the upper deviation around $\ghat{k}{j}{s}$ is given by
\begin{align*}
\updev{k}{j}{s} = \alpha_j \frac{\log s}{\npulls{k,j}{s}} + \beta_j \sqrt{\frac{\log s}{\npulls{k,j}{s}}} ,
\end{align*}
where $\alpha_j \defeq \frac{8(j-1)}{3}$ and $\beta_j\defeq \sqrt{2}(\sqrt{j-1}+1)$ are constants independent of the macro-arms and $\npulls{k,j}{s}$ is the number of pulls of arm pair $(k,j)$ at the end of epoch $s$.

\begin{algorithm}\label{algorithm}
\caption{\starucb{} Algorithm}\label{gucb}
\SetAlgoLined
\DontPrintSemicolon
\textbf{Input:} time budget $T$, maximum waiting time $D$ \;
$\alpha_j \leftarrow \frac{8(j-1)}{3},  \beta_j \leftarrow \sqrt{2}(\sqrt{j-1}+1)$ for all $j \in [D]$\;
$\npulls{k,j}{0} = 0$ for all $(k,j) \in [K] \times [D]$ \;
$s = 1$ \;
\While{$T > 0$}{
Pull $i_s \gets$ $\argmax\limits_{(k,j) \in [K] \times [D]}   { \ghat{k}{j}{s-1} + \updev{k}{j}{s-1} }$\tcp*{see \eqref{eqn:g-estimate} for $\ghat{k}{j}{s-1}$}
$T \gets T - \min \{j_s,\tau_s \}$, where $i_s = (k_s, j_s)$\;
$\npulls{(k,j)}{s} \gets \npulls{(k,j)}{s-1} + \ind{i_s = (k,j)} \,\, \forall (k,j)$\;
$s \gets s + 1$\;%\tcp*{increment counters}\;
}
\end{algorithm}
Let us summarize the main idea behind the algorithm. Before an arm is pulled at least once, all the upper confidence bounds are initialized to infinity. This forces us to explore each arm at least once. After this phase, the arm that has the highest upper confidence bound is played. As an arm is played often, from the concentration inequality given in Lemma~\ref{lemma:ratio-improved}, our estimate of the ratio of reward to resource consumed by the arm becomes sharper and we converge to the optimal arm.

\section{Expected regret of \starucb{}}
\label{sec:regret}

We begin bounding the regret of \starucb{} by bounding the number of times a suboptimal arm is pulled.
\subsection{ Bound on expected number of pulls}
\label{sec:pre-regret-bound}

\begin{lemma}\label{lemma:exp-sub-pulls}
For any sub-optimal arm pair $(k,j)$: $\gap{k}{j} \defeq \gtrue{k^*}{j^*} - \gtrue{k}{j} > 0$, the expected number of pulls in $L$ epochs is given by
\begin{align*}
\E[\npulls{k,j}{L}] \leq  \log(T)\left(\frac{\beta_j + \sqrt{\beta_j^2+2\alpha_j\gap{k}{j}}}{\gap{k}{j}}\right)^2+ \frac{4 \pi^2}{3} .
\end{align*}
\NoEndMark
\end{lemma}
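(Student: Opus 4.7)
The plan is to follow the classical UCB regret decomposition, tailored to the two-term confidence radius coming from Lemma~\ref{lemma:ratio-improved}. For a sub-optimal pair $(k,j)$ and some threshold $u$ (to be chosen), I would start from
\[
\npulls{k,j}{L}\;\le\;u\;+\;\sum_{s=1}^{L}\ind{i_s=(k,j),\ \npulls{k,j}{s-1}\ge u},
\]
which holds deterministically since $\npulls{k,j}{\cdot}$ is non-decreasing and increments only on pulls of $(k,j)$. Because every epoch consumes at least one unit of time ($\tau\ge 1$ and $j\ge 1$), $L\le T$, so after taking expectations I can sum the indicators up to $T$. On the event $\{i_s=(k,j)\}$, the selection rule forces $\ghat{k}{j}{s-1}+\updev{k}{j}{s-1}\ge \ghat{k^*}{j^*}{s-1}+\updev{k^*}{j^*}{s-1}$, and this in turn implies that at least one of the following three events occurs: (A) $\ghat{k^*}{j^*}{s-1}+\updev{k^*}{j^*}{s-1}<\gtrue{k^*}{j^*}$ (UCB of the optimum undershoots), (B) $\ghat{k}{j}{s-1}-\updev{k}{j}{s-1}>\gtrue{k}{j}$ (suboptimal mean overshoots), or (C) $2\updev{k}{j}{s-1}\ge \gap{k}{j}$ (the radius is too wide).

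Next I would pick $u$ so that event (C) cannot occur once $\npulls{k,j}{s-1}\ge u$. Using $\log s \le \log T$, it suffices that $\alpha_j \frac{\log T}{u}+\beta_j\sqrt{\frac{\log T}{u}}\le \frac{\gap{k}{j}}{2}$. Writing $v=\sqrt{\log T/u}$, this is the quadratic $\alpha_j v^2+\beta_j v = \gap{k}{j}/2$, whose positive root is $v=(-\beta_j+\sqrt{\beta_j^2+2\alpha_j \gap{k}{j}})/(2\alpha_j)$. Rationalising yields exactly
\[
u\;=\;\log(T)\left(\frac{\beta_j+\sqrt{\beta_j^2+2\alpha_j \gap{k}{j}}}{\gap{k}{j}}\right)^{\!2},
\]
which accounts for the leading term in the lemma. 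This is the calculation that forces the particular algebraic form of the bound.

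Finally I need to bound $\sum_{s}\bbP(\text{(A) or (B) at epoch }s)$. For a fixed number of samples $n$, Lemma~\ref{lemma:ratio-improved} applied with $\delta_s = 4/s^4$ gives $\log(4/\delta_s)=4\log s$; substituting $B=j$ (since $\min\{j,\tau_k\}\in[1,j]$) and using $\mu_Y\ge 1$ recovers the algorithm's radius $\updev{k}{j}{s-1}=\alpha_j\log s/n+\beta_j\sqrt{\log s/n}$ as a valid upper bound, so each of the fixed-$n$ deviation events has probability at most $4/s^{4}$. The main obstacle is that $\npulls{k,j}{s-1}$ is random, so I would apply the lemma to the conceptual i.i.d.\ stream of outcomes for each arm at every possible count $n\in\{1,\ldots,s\}$ and take a union bound, giving $\bbP(A)+\bbP(B)\le 2\cdot s\cdot 4/s^{4}=8/s^{3}$. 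Summing over $s\ge 1$ and using $\sum_{s\ge 1}s^{-3}\le \sum_{s\ge 1}s^{-2}=\pi^2/6$ yields the additive $\tfrac{4\pi^2}{3}$ constant. Putting the two pieces together produces the stated bound on $\bbE[\npulls{k,j}{L}]$.
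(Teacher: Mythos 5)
Your proposal is correct and follows essentially the same route as the paper's proof: the standard UCB decomposition into an under-sampled term plus deviation events, the same three-way case split, the same quadratic in $\sqrt{\log T/u}$ yielding the threshold $\log(T)\bigl((\beta_j+\sqrt{\beta_j^2+2\alpha_j\gap{k}{j}})/\gap{k}{j}\bigr)^2$, and a union bound over the possible (random) pull counts giving the $4\pi^2/3$ constant. The only cosmetic difference is that you instantiate Lemma~\ref{lemma:ratio-improved} directly with $\delta_s=4/s^4$ to recover the radius, whereas the paper routes the same computation through an inversion corollary and a triple sum over epochs and pull counts.
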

A proof of this lemma can be found in Appendix~\appref{app:proof-of-exp-sub-pulls-lemma}.

\begin{lemma} \label{lemma:suff-sample}
Let $l = \log(T)\left(\frac{\beta_j + \sqrt{\beta_j^2 + 2 \alpha_j \gap{k}{j}}}{\gap{k}{j}} \right)^2$. Then for any epoch $s$ such that $l \leq s \leq L $,
\begin{align*}
\Pr \left( \left| \gtrue{k}{j} - \ghat{k}{j}{s} \right| \geq \epsilon \right) \leq 4 s^{-4} ,
\end{align*}
where $\epsilon = \alpha_j \frac{\log s}{\npulls{k,j}{s}} + \beta_j \sqrt{\frac{\log s}{\npulls{k,j}{s}}}$.
\NoEndMark
\end{lemma}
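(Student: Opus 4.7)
The plan is to apply the ratio concentration inequality of Lemma~\ref{lemma:ratio-improved} to the i.i.d.\ sample of reward/delay pairs generated whenever arm pair $(k,j)$ is pulled, with a carefully tuned failure probability $\delta = 4 s^{-4}$. First I would identify the relevant variables: for each epoch $m$ in which pair $(k,j)$ is played, set $X_m := \ind{j \geq \tau_{k,m}} \cdot V_{k,m}$ and $Y_m := \min\{j, \tau_{k,m}\}$, so that $X_m \in [0,1]$, $Y_m \in [1, j]$, the constant $B$ in Lemma~\ref{lemma:ratio-improved} equals $j$, and $\mu_Y = \E[\min\{j, \tau\}] \geq 1$. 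Since rewards and delays drawn from $Q_k$ are independent of the algorithm's choices, the $(X_m, Y_m)$ arising from pulls of $(k,j)$ are i.i.d.; the estimator $\ghat{k}{j}{s} = \hat{X}_n / \hat{Y}_n$ with $n = \npulls{k,j}{s}$ is then exactly of the form treated by Lemma~\ref{lemma:ratio-improved}.

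The choice $\delta = 4 s^{-4}$ is designed so that the bound from Lemma~\ref{lemma:ratio-improved} matches $\epsilon$ term-by-term. With $\log(4/\delta) = 4 \log s$, the three terms on the right-hand side of Lemma~\ref{lemma:ratio-improved} become
\[
\sqrt{\tfrac{2(j-1)\log s}{n}}, \qquad \tfrac{8(j-1)}{3} \cdot \tfrac{\log s}{n}, \qquad \sqrt{\tfrac{2 \log s}{n}},
\]
after using $\mu_Y \geq 1$ to absorb the denominator in the middle term. The two square-root terms combine to $\sqrt{2}(\sqrt{j-1} + 1) \sqrt{\log s / n} = \beta_j \sqrt{\log s / n}$, and the middle term is $\alpha_j \log s / n$, so their sum equals $\epsilon$ exactly when $n = \npulls{k,j}{s}$. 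This algebraic matching is the origin of the particular constants $\alpha_j$ and $\beta_j$ used throughout the algorithm.

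The main obstacle is handling the randomness of $\npulls{k,j}{s}$: Lemma~\ref{lemma:ratio-improved} controls deviations for a fixed deterministic sample size, whereas $\npulls{k,j}{s}$ is determined by the algorithm's interaction with the environment. I would address this by decomposing the bad event according to the realized value of $\npulls{k,j}{s}$: on $\{\npulls{k,j}{s} = n\}$, the estimator $\ghat{k}{j}{s}$ depends only on the first $n$ i.i.d.\ samples from $Q_k$, so Lemma~\ref{lemma:ratio-improved} applies with that particular $n$. The hypothesis $l \leq s \leq L$ is expected to be used at this step to restrict the relevant range of $n$, so that a clean union bound---possibly with a mild sharpening of $\delta$ to absorb an extra factor---yields the stated $4 s^{-4}$ failure probability. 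The term-by-term matching of the confidence radius to Lemma~\ref{lemma:ratio-improved} is routine; the subtle part is ensuring that the concentration holds uniformly over the random sample size.
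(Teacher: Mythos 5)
Your proposal is correct and is essentially the paper's argument run in the forward direction: the paper starts from the inverted bound (Corollary~\ref{cor:inversion}) with $B=j$ and solves for the $\alpha_j,\beta_j$ that make the exponent equal $4\log s$, whereas you substitute $\delta = 4s^{-4}$ into Lemma~\ref{lemma:ratio-improved} and verify term-by-term (after bounding $\mu_Y \geq 1$ in the Bernstein term) that the resulting radius is exactly $\epsilon$ --- the same algebra, arguably cleaner. The random-sample-size subtlety you flag is genuine but is not resolved inside this lemma in the paper either; it is handled in the proof of Lemma~\ref{lemma:exp-sub-pulls} by reindexing the estimator by the deterministic pull count $p$ (the quantities $\hhat{k}{j}{p}$) and taking a union bound over $p$, so your proposed decomposition over $\{\npulls{k,j}{s}=n\}$, with its attendant extra factor of $s$, is not needed.
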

The proof of the above lemma (see Appendix~\appref{app:proof-of-suff-sample-lemma}) 
is based on tuning the values of $\alpha_j$ and $\beta_j$ in the algorithm. It follows by a heavy sequence of algebraic steps and is omitted from the main text for brevity. 

\subsection{Regret bound}
\label{sec:regret-bound}

\begin{theorem}\label{regret}
\starucb{}'s pseudo-regret is at most
{\small
\begin{align*}
\sum_{(k,j) | \gap{k}{j} > 0} \mu_{k,j}^{(c)} \left( \frac{\left( \beta_j+ \sqrt{\beta_j^2  + 2 \gap{k}{j} \alpha_j } \right)^2 \log T}{\gap{k}{j}}  + \bo(1) \right) ,
\end{align*}
}
where  
we define the mean waiting time $\mu_{k,j}^{(c)} := \E [ \cost{k,j}{1} ]$.
\NoEndMark
\end{theorem}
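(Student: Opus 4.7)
The plan is to convert the regret into a weighted sum of suboptimal pull counts and then invoke Lemma~\ref{lemma:exp-sub-pulls}. Write $g^* \defeq \gtrue{k^*}{j^*}$, $\mu_{k,j}^{(r)} \defeq \E[\reward{k,j}{1}]$, and recall from \eqref{eqn:g} that $\mu_{k,j}^{(r)} = \gtrue{k}{j}\,\mu_{k,j}^{(c)}$. First I would rewrite the benchmark term $\max_{(k,j)}\E[\sum_{s=1}^{L_{k,j}}\reward{k,j}{s}]$ using Theorem~\ref{thm:exp-cumulative-reward}, which gives $T g^* + A_{k^*,j^*}$ with $A_{k^*,j^*} \in [0,j^*] \subseteq [0,D]$; this is the clean, deterministic expression for the optimal cumulative reward.

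Next I would rewrite the algorithm's cumulative reward using a Wald-type identity. Because the play rule at epoch $s$ depends only on the history through epoch $s-1$, the indicator $\ind{i_s = (k,j)}$ is predictable, so conditioning on the history and using that $(V_{k,s},\tau_{k,s})\sim Q_k$ is independent of it gives
\begin{align*}
\E\!\left[\sum_{s=1}^{L} \reward{i_s}{s}\right]
  = \sum_{(k,j)}\mu_{k,j}^{(r)}\,\E[\npulls{k,j}{L}]
  = \sum_{(k,j)} \gtrue{k}{j}\,\mu_{k,j}^{(c)}\,\E[\npulls{k,j}{L}],
\end{align*}
and an identical argument for the cost gives $\E[\sum_{s=1}^L \cost{i_s}{s}] = \sum_{(k,j)} \mu_{k,j}^{(c)}\,\E[\npulls{k,j}{L}]$. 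Since the algorithm stops as soon as the remaining budget becomes nonpositive and each single epoch consumes at most $D$ units, this last expectation lies in $[T-D,\,T]$ (indeed in $[T-D,T+D]$ even allowing overshoot). Therefore
\begin{align*}
T g^*
  \le g^*\!\sum_{(k,j)} \mu_{k,j}^{(c)}\,\E[\npulls{k,j}{L}] + D g^* .
\end{align*}
Substituting into \eqref{eqn:regret} and collecting terms,
\begin{align*}
R_T
 \le \sum_{(k,j)} \mu_{k,j}^{(c)}\, (g^* - \gtrue{k}{j})\,\E[\npulls{k,j}{L}] + D(g^*+1)
 = \sum_{(k,j):\gap{k}{j}>0} \mu_{k,j}^{(c)}\,\gap{k}{j}\,\E[\npulls{k,j}{L}] + \bo(D).
\end{align*}

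Now I would plug in Lemma~\ref{lemma:exp-sub-pulls}, which bounds $\E[\npulls{k,j}{L}]$ by $\log(T)\bigl(\beta_j + \sqrt{\beta_j^2+2\alpha_j \gap{k}{j}}\bigr)^2/\gap{k}{j}^{\,2} + \tfrac{4\pi^2}{3}$. Multiplying by $\mu_{k,j}^{(c)}\gap{k}{j}$ cancels one factor of $\gap{k}{j}$ and produces exactly the main term stated in the theorem; the residual $\mu_{k,j}^{(c)}\gap{k}{j} \cdot \tfrac{4\pi^2}{3}$ plus the $\bo(D)$ boundary term are absorbed into $\mu_{k,j}^{(c)}\cdot\bo(1)$ (using $\gap{k}{j}$ bounded, $\mu_{k,j}^{(c)}\le D$, and $|\mathcal A|$ finite).

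The two delicate points to get right will be: (i) justifying the Wald-style decomposition despite $L$ being a data-dependent stopping time — I would handle this by writing $\npulls{k,j}{L} = \sum_{s=1}^{\infty}\ind{s\le L}\ind{i_s=(k,j)}$, noting that $\{s\le L\}$ and $\{i_s=(k,j)\}$ are measurable with respect to the history through epoch $s-1$, so conditioning on that history turns the one-step reward/cost into an independent draw from $Q_k$ with known mean and monotone convergence finishes the exchange of sum and expectation; and (ii) the boundary slack from the last, possibly budget-overshooting, epoch, which I handle by the $[T-D, T]$ bracket above. Both are standard once set up carefully; the algebraic simplification in the final step is routine.
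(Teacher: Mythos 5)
Your proposal is correct and follows essentially the same route as the paper's proof: decompose the regret into $\sum_{(k,j):\gap{k}{j}>0}\mu_{k,j}^{(c)}\gap{k}{j}\E[\npulls{k,j}{L}]$ plus lower-order terms by combining the Wald-type reward decomposition with the budget identity $T\lesssim\sum_{(k,j)}\mu_{k,j}^{(c)}\E[\npulls{k,j}{L}]$, and then invoke Lemma~\ref{lemma:exp-sub-pulls}. Your treatment is in fact slightly more careful than the paper's on the two delicate points you flag (predictability of $\ind{i_s=(k,j)}$ in the optional-stopping step, and the up-to-$D$ boundary slack in the budget inequality), which the paper glosses over.
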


The $\bo(1)$ only hides moderate constants and scales as $\Delta_{k,j} \leq 1$. 
The leading term (involving $\log T$) in the above bound is less explicit due to the notation; a coarser version of the leading term is
\begin{align}
\bo \left( \sum_{(k,j) | \gap{k}{j} > 0}
\mu_{k,j}^{(c)} \left( \frac{j \log T}{\gap{k}{j}} \right) \right) \label{eqn:regret-simplified} .
\end{align}
In comparing this result to previous regret bounds of \cite{flajolet2015logarithmic},
%(hereafter ``FJ'') 
 \cite{xia2016budgeted}, and \cite{ding2013multi} 
%(hereafter ``DQZL'') 
for UCB-Simplex, Budget-UCB, and UCB-BV1 respectively, we focus on the case of a single macro-arm ($K=1$), as this is enough to capture the ``waiting'' aspect of the problem. Let us assume that all suboptimal arms have gap lower bounded by $\Delta > 0$. Then since all the $j$ are at most $D$, the leading term \eqref{eqn:regret-simplified} in our regret bound is of order at most
\begin{align}
% \left( \frac{D^2}{\Delta} \sum_j \mu_{1,j}^{(c)} \right) \log T
% = 
\left( \frac{D^2}{\Delta} \bar{\mu}^{(c)} \right) \log T , \label{eqn:WAIT} \tag{WAIT}
\end{align}
where we define $\bar{\mu}^{(c)} := \frac{1}{D} \sum_j \mu_{1,j}^{(c)}$. 
Now, in the worst case  (when mean waiting times are high), this term becomes $(D^3 / \Delta) \log T$. However, for easier problems where the mean waiting time for most arms is much smaller than $D$, the term improves to $(D^2 / \Delta) \log T$.  
Before comparing this result to the regret bounds of \cite{flajolet2015logarithmic}, \cite{xia2016budgeted}, and  \cite{ding2013multi}, it is important to note that those works have stochastic resource consumptions lying in $[0,1]$. Therefore, to view the \fta{} framework in their setting, we rescale our consumptions from $[D]$ to $\{\frac{1}{D}, \frac{2}{D}, \ldots, 1\}$. Consequently, for each gap $\Delta_{1,j}$ in our paper, the corresponding gap in their paper will be scaled up by $D$. 

With this conversion in mind, we turn our attention to Corollary 1 of \cite{ding2013multi}. After some unpacking, one can see that in an \fta{} problem, the leading term of their regret bound is of order 
\begin{align}
\left( D^4 + \frac{D^3}{\Delta} + \frac{D^2}{\Delta^2} \right) \frac{\mu_{1,j^*}^{(r)}}{\mu_{1,j^*}^{(c)}} \log T. \label{eqn:ding} \tag{DQZL}
\end{align}
In the situation where the optimal arm's mean waiting time $\mu_{1,j^*}^{(c)} \in [1, D]$ is small, their bound is noticeably worse: the first term is quartic in $D$, the second term matches our worst-case bound, and the third term  is quadratic in $(1/\Delta)$. Yet, when the mean waiting time for the optimal arm is large, their bound becomes closer to the behavior of our bound were $\bar{\mu}^{(c)}$ to be small. In either case, their bound grows as $\frac{1}{\Delta^2}$. 

Next, we compare our regret bound to a regret bound of \cite{flajolet2015logarithmic} for UCB-Simplex. After converting their notation to ours, their Theorem 1 gives regret that is of order
\begin{align}
\left( \frac{D^2}{\Delta} \sum_j \frac{1}{\mu_{1,j}^{(c)}} \right) \log T  
 + \bo(D^3) . \label{eqn:FJ} \tag{FJ}
\end{align}
The leading terms in \eqref{eqn:FJ} versus \eqref{eqn:WAIT} are close but there is a important distinction. Both leading terms contain a common factor of $\frac{D^2}{\Delta}$. However, \eqref{eqn:WAIT} has the average of the mean waiting times, while \eqref{eqn:FJ} has the sum of the reciprocal mean waiting times. When the mean waiting times are small, the average is smaller. When the mean waiting times are larger, the sum of reciprocals is smaller. Each quantity has a range of $[1, D]$. However, in any case, the constant term in \eqref{eqn:FJ} can be of order $D^3$, whereas the constant term from Theorem~\ref{regret} (not shown but visible in the proof) is \emph{always} $\bo(D^2)$. A similar comparison can also be made for Budget-UCB from Theorem 3 of \cite{xia2016budgeted} which gives the same logarithmic leading term as in \eqref{eqn:FJ} when the budget is sufficiently large.

We posit that the reason for our regret bound's improvement over the other bounds (in some regimes) is that our analysis is quite different: we directly form an upper confidence bound for the ratio estimator, and this gives us an opportunity to leverage a Bernstein-style improvement (from Bernstein's inequality). It would be interesting to somehow combine the style of analysis used in this work and the style from one of the other works to get a bound that dominates in all regimes. However, based on the experimental results in Section~\ref{sec:expt}, it might be that our regret bounds for the \emph{existing} \starucb{} algorithm could be improved.

\section{Experiments}
\label{sec:expt}

Our experiments focus on testing \starucb{} in three different scenarios in the \fta{} framework.
In all our experiments, we compare \starucb{} to UCB-Simplex \citep{flajolet2015logarithmic}  and Budget-UCB \citep{xia2016budgeted}; recall that these algorithms also admit  logarithmic regret bounds in the BwK setting. Also, UCB-BV1 \citep{ding2013multi} was considered in all experiments, but it was later excluded
as the other algorithms performed
better. The delay $\tau_k$ and potential reward $V_k$ are chosen such that they have a moderate minimum gap $\Delta$. The delay  distribution   $\tau_k$ over $[D]$ for each experiment is given as a bar graph above the cumulative regret figures. Each experiment in this section is an average over 10 independent runs, and we use the same time budget of $ T=10^7$ rounds for all the experiments. The pseudo-regret in each experiment is calculated as
\begin{align*}
R_t = t \cdot \gtrue{k^*}{j^*} -  \sum_{s=1}^{L_t} \reward{i_s}{s} ,
\end{align*} 
where $L_t$ is the algorithm's last epoch for budget $t$.
\subsection{One macro-arm and several micro-arms}
We start by having just 1 macro arm with deterministic potential reward $V_{k,s}=1$ and $D=10$ micro-arms  configured with $\tau_k$ such that the optimal arm falls in 
different intervals of $[D]$,
allowing us to understand the algorithms' behavior for various $\tau_k $. 

In the first experiment, we chose $\tau_k$ such that the delay doubles with the rewards obtained (see the bar graph in Figure \ref{fig:doubling}). For instance, if an algorithm chooses to wait for only 1 round, it consumes less resources and so can play for more epochs; on the flip side, if it decides to wait slightly longer than two rounds, it has twice the chance of getting the reward. This delay distribution induces a minimal gap of $\Delta = 0.042$.  
The learning algorithm must navigate a tight trade-off to select the optimal waiting time.
In the next experiment, we chose the delay $\tau_k$ such that the optimal arm lies in the middle of $[D]$ incurring a moderate gap of $\Delta = 0.124$. The final experiment in this section stems from the observation that $\alpha_j$ becomes zero for arm $j=1$, and so to study the behavior of the algorithm when the optimal arm is $1$, an appropriate delay $\tau_k$ that incurs a moderate gap of $\Delta = 0.166$ is chosen. 

Figure~\ref{exp_10_micro_arm} shows the performance of \starucb{} for the configurations of delay distribution that were discussed before. These experiments demonstrate a few interesting insights. From Figures  \ref{fig:doubling} and \ref{fig:mid_best}, we can see that for the first two experiments, 
 \starucb{} performs much better than UCB Simplex and Budget-UCB. We believe that the main reason for this comes from the underlying principle on which UCB algorithms are developed, i.e., Optimism in the Face of Uncertainty. In our \fta{} framework,  the uncertainty in getting a reward decreases if we choose to  wait for longer time. This behavior is very well captured by \starucb{} in the construction of the confidence radius (which grows with $j$), resulting in relatively quick convergence towards the best arm. Also, observe that $\alpha_j$ becomes $0$ for arm $j=1$, and in the last experiment (Figure \ref{fig:one_best}), \starucb{} performs worse than the other algorithms; however, it still enjoys logarithmic regret.

\begin{figure*}[h]
    \begin{tabular}{ccc}
        \centering
        \begin{subfigure}[b]{0.27\textwidth}
            \includegraphics[width = \textwidth]{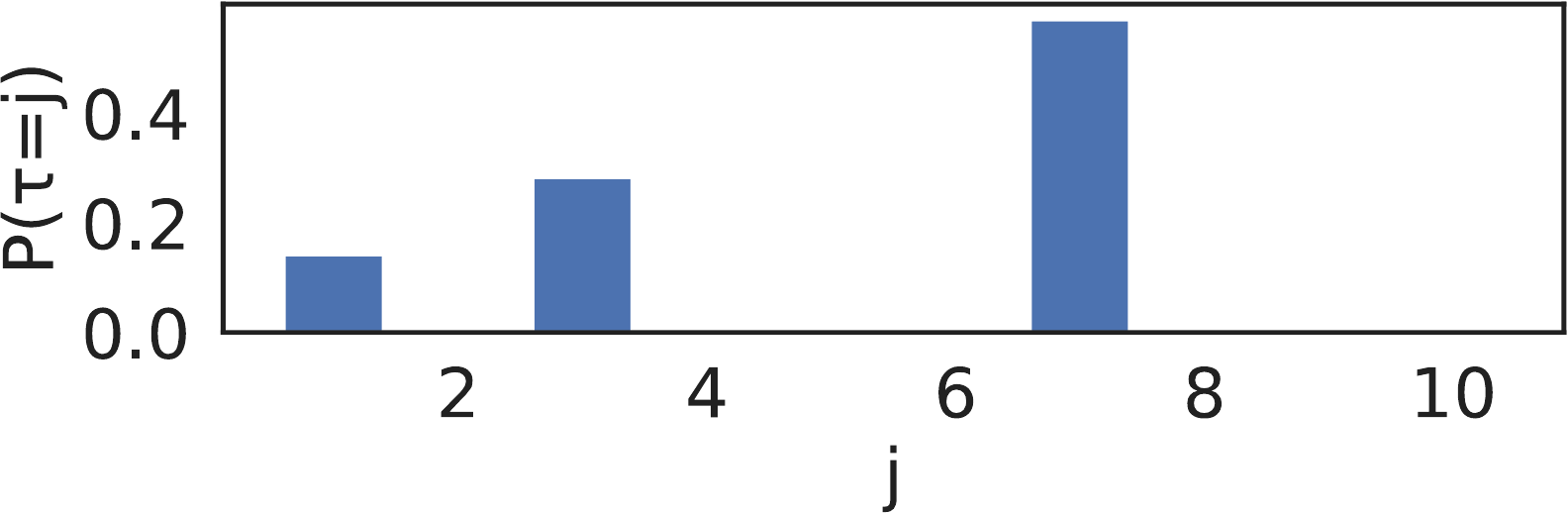}
        \end{subfigure}&
        \begin{subfigure}[b]{0.27\textwidth}
            \includegraphics[width=\textwidth]{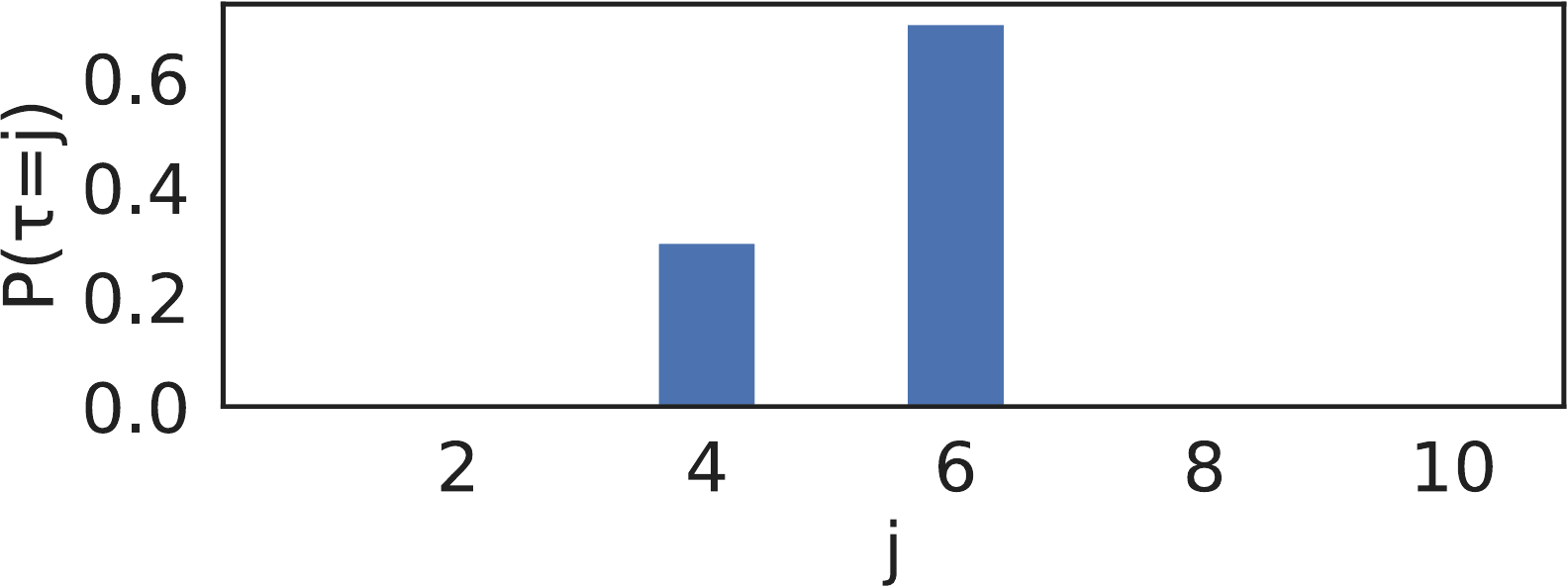}
        \end{subfigure}&

        \begin{subfigure}[b]{0.27\textwidth}
            \includegraphics[width=\textwidth]{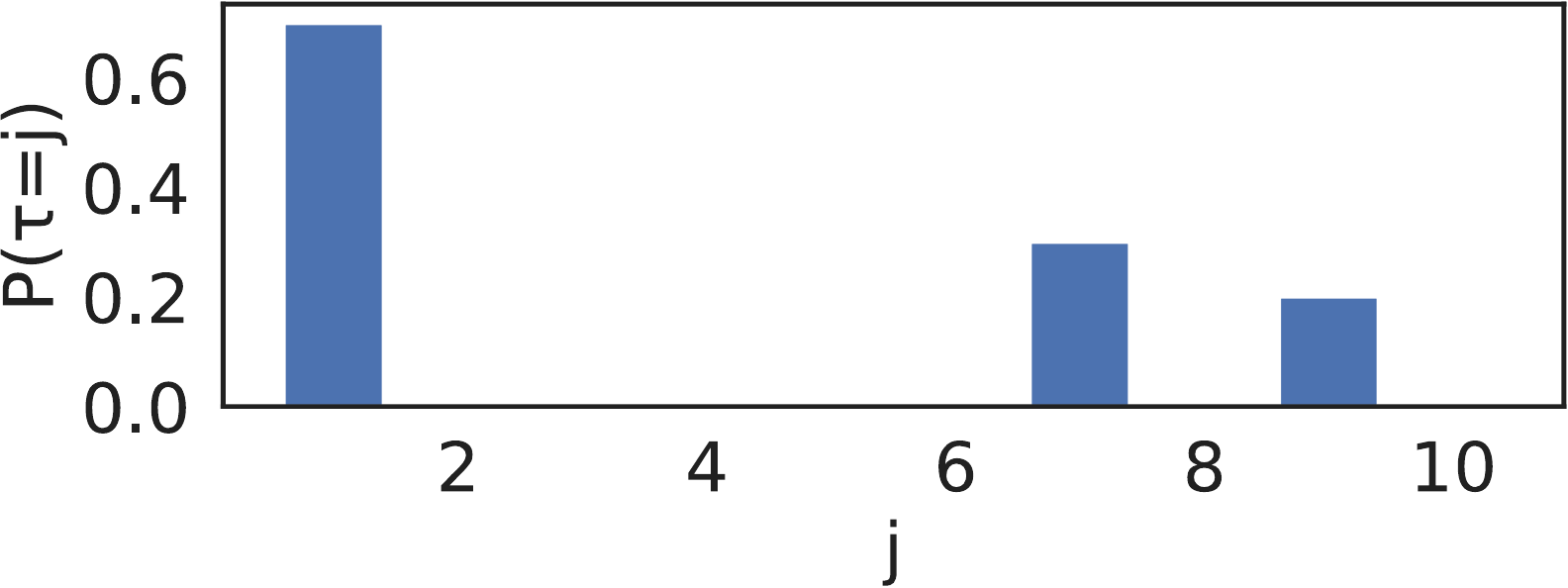}
        \end{subfigure}
    \\
        
        \begin{subfigure}[b]{0.3\textwidth}
            \includegraphics[scale=0.35]{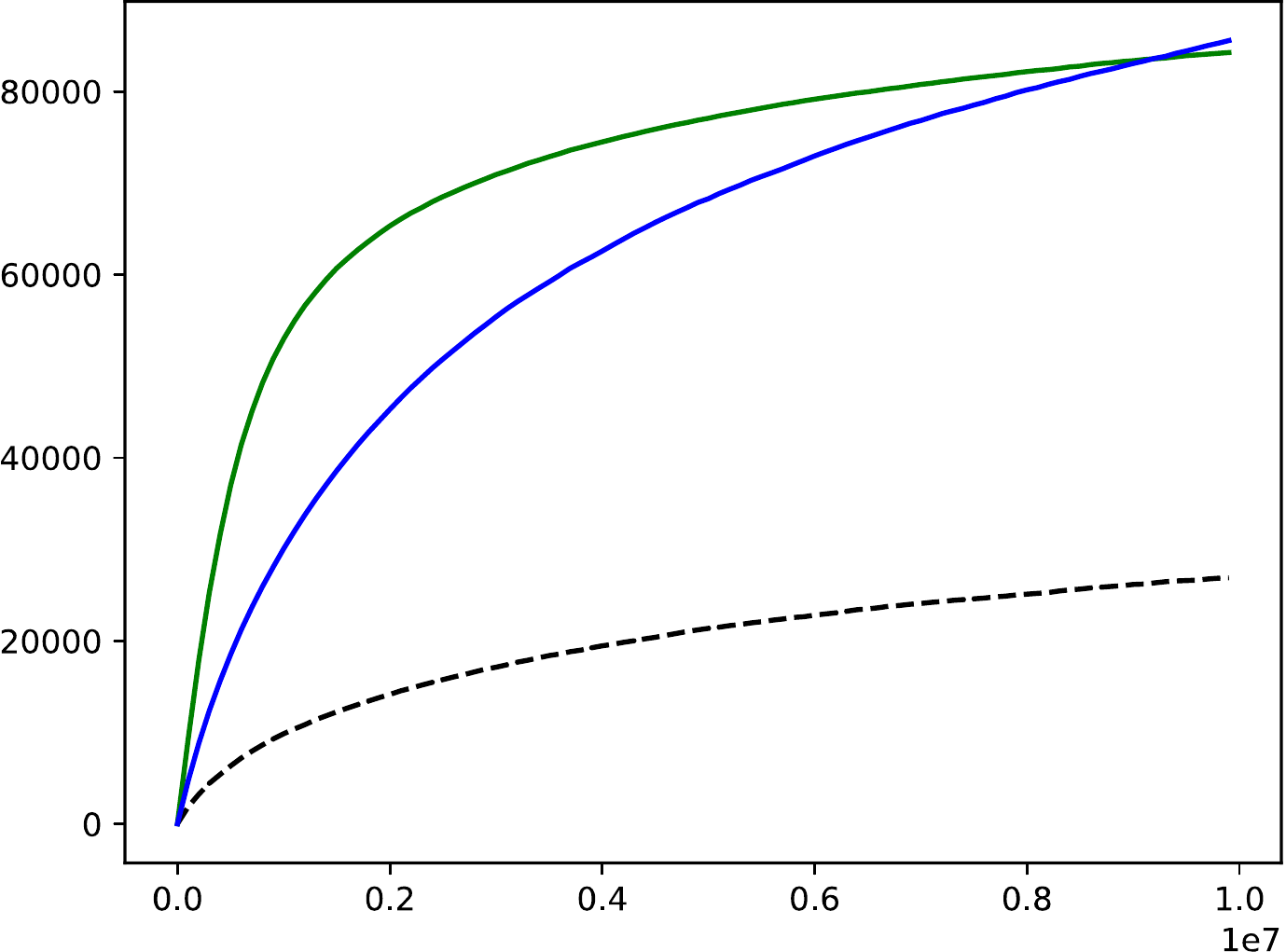}
                    \caption{Doubling Scenario: $j^* \gtrsim [D/2]$}        
            \label{fig:doubling}
        \end{subfigure}&

        \begin{subfigure}[b]{0.3\textwidth}
            \includegraphics[scale=0.35]{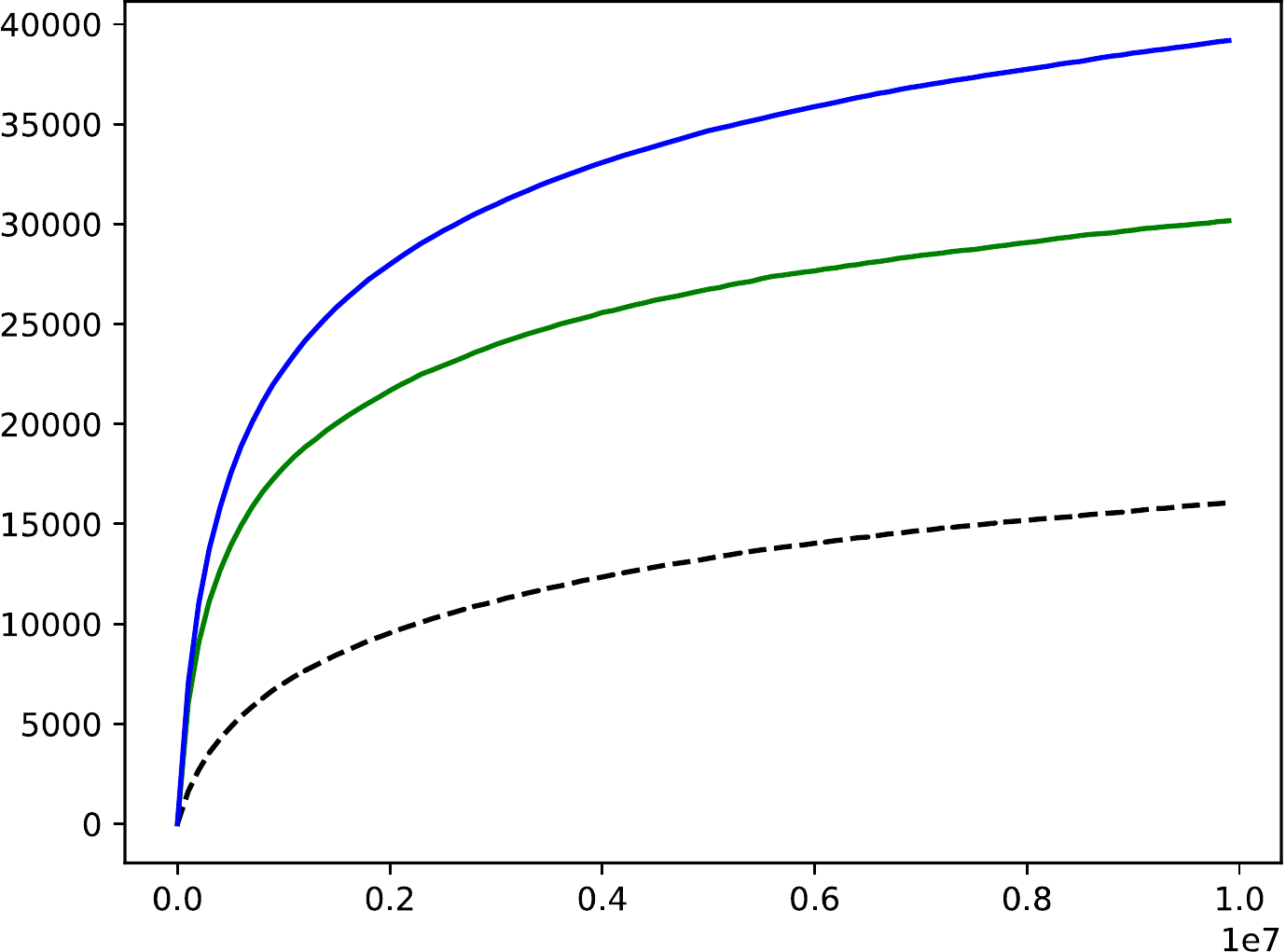}
            \caption{$j^* \approx [D/2]$ }
            \label{fig:mid_best}
        \end{subfigure}&

        \begin{subfigure}[b]{0.3\textwidth}
            \includegraphics[scale=0.35]{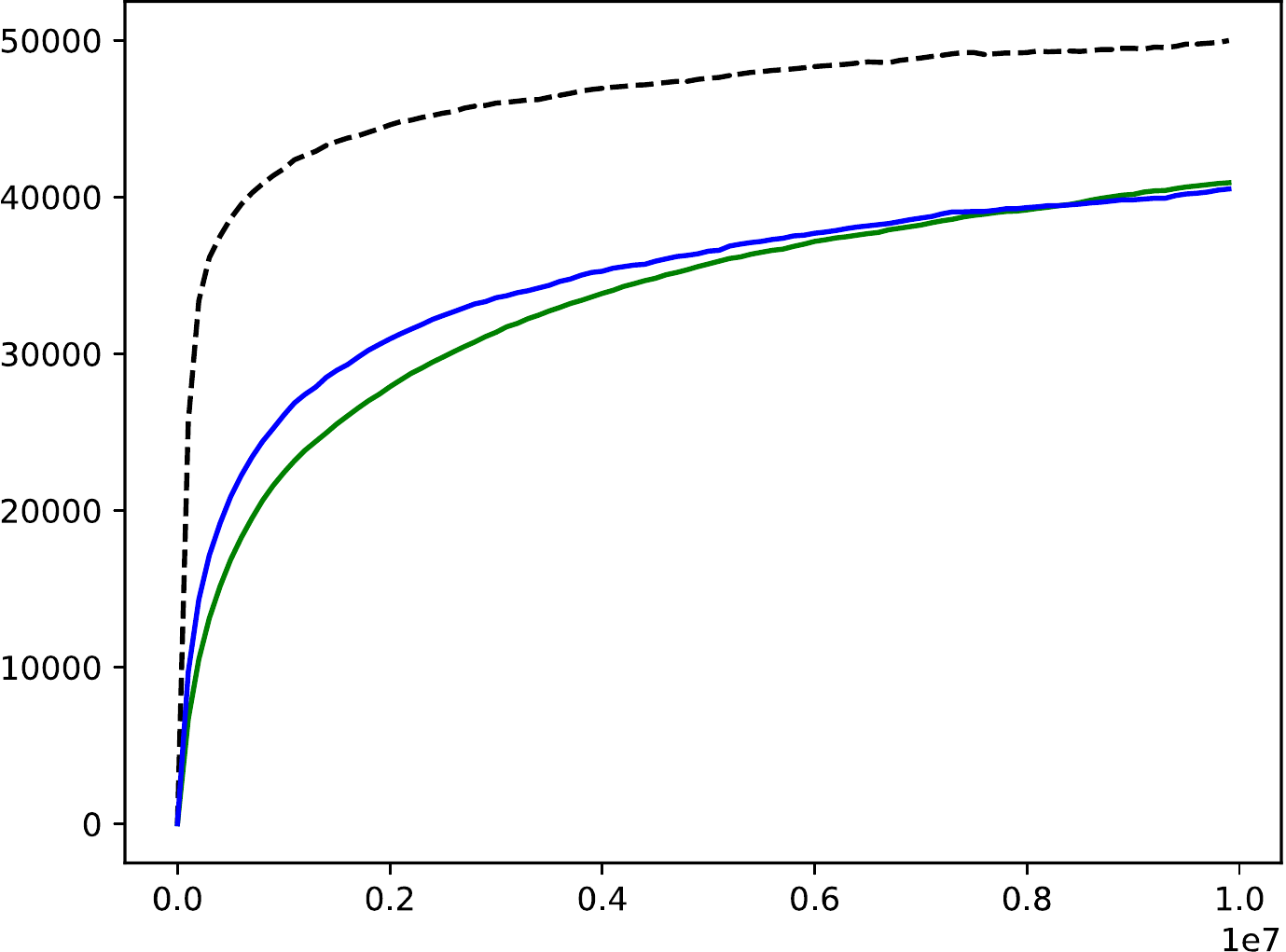}
            \caption{ $j^* = 1$}
            \label{fig:one_best}
        \end{subfigure}
    \end{tabular}
    \caption{Cumulative Regret of \starucb{}  for $D  = 10, K  = 1, V_k = 1$}
    \label{exp_10_micro_arm}
    \end{figure*}
    
    \begin{figure*}[h]
    \begin{tabular}{c|c|}
    \cline{2-2}
    
    \begin{tabular}{c}
        \begin{subfigure}[b]
    {0.3\textwidth}
            \centering
            \includegraphics[scale=0.5]{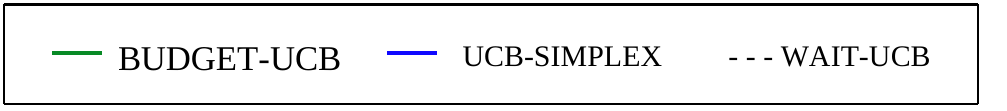}

        \end{subfigure}
        \\[0.6cm]
        \begin{subfigure}[b]{0.3\textwidth}
            \includegraphics[scale=0.35]{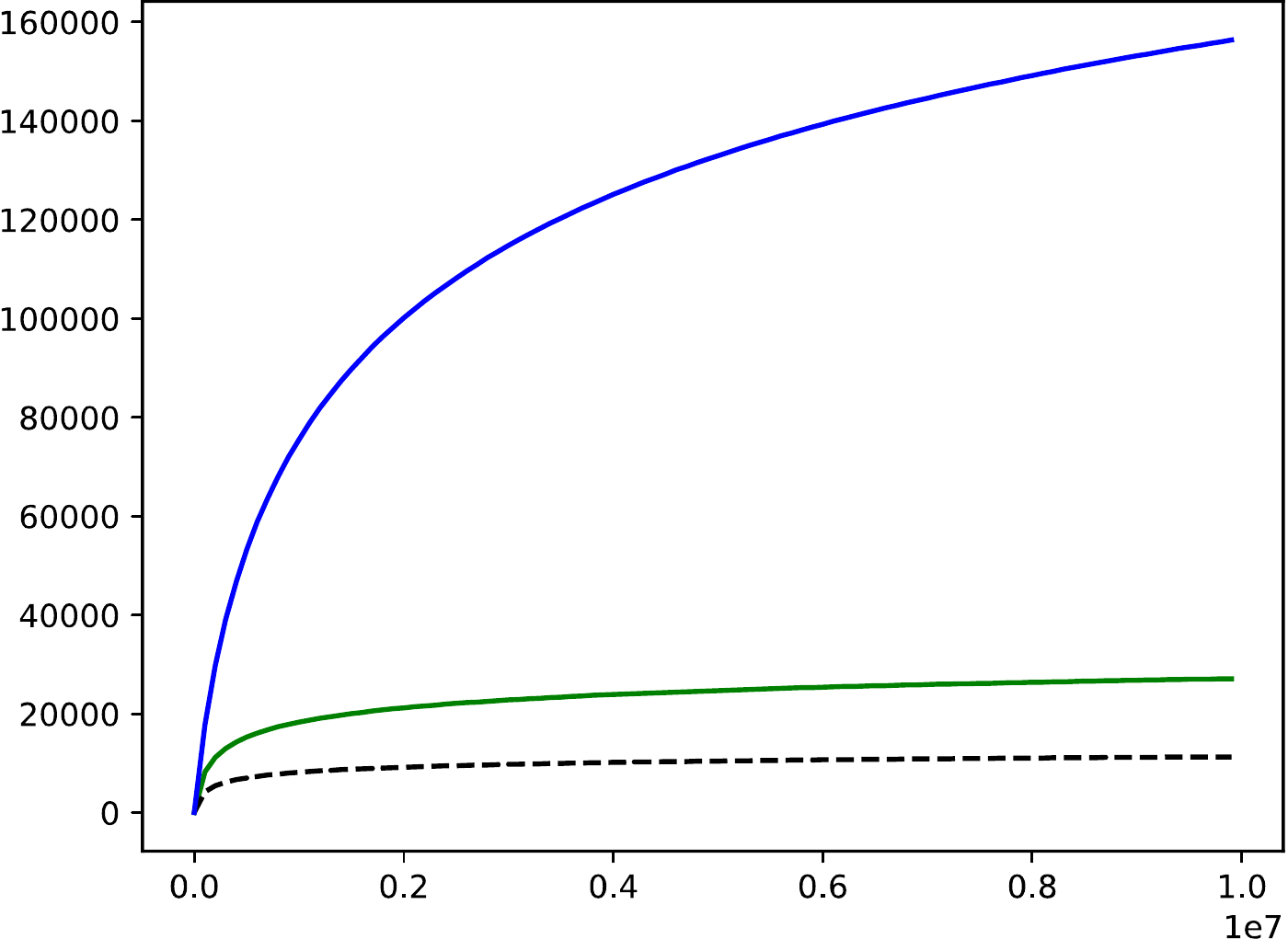}
            \caption{for $D  = 10, K  = 1, V_k = 1$}
            \label{fig:exp_standard_mab}    
        \end{subfigure}
    
    \end{tabular}  
    
    &
    
    \begin{tabular}{c}
    
    \begin{tabular}{@{\hskip-3pt}ccc@{\hskip-3pt}}
    
        \begin{subfigure}[b]{0.17\textwidth}
            \includegraphics[width = \textwidth]{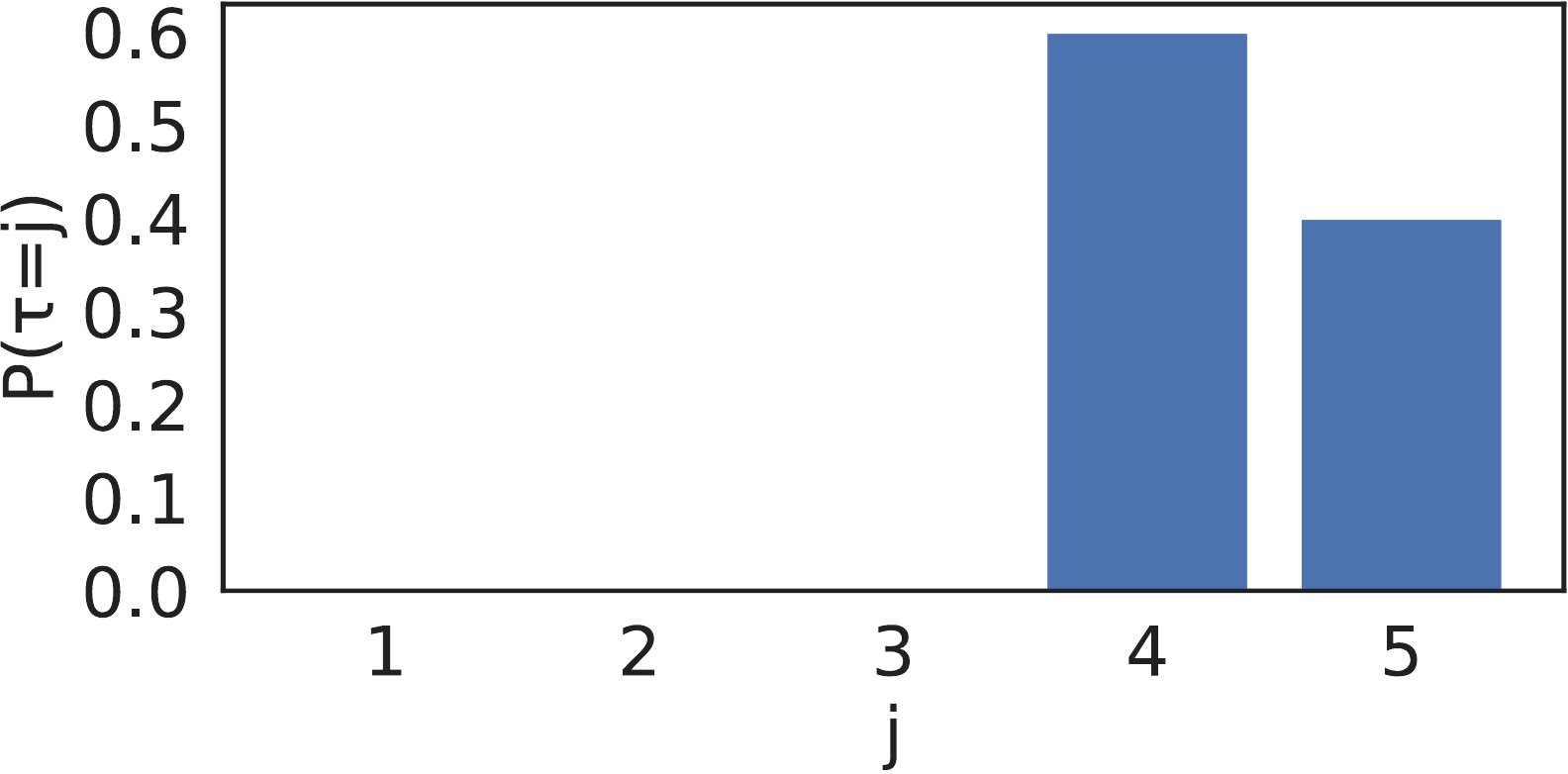}
        \end{subfigure}&

        \begin{subfigure}[b]{0.17\textwidth}
            \includegraphics[width=\textwidth]{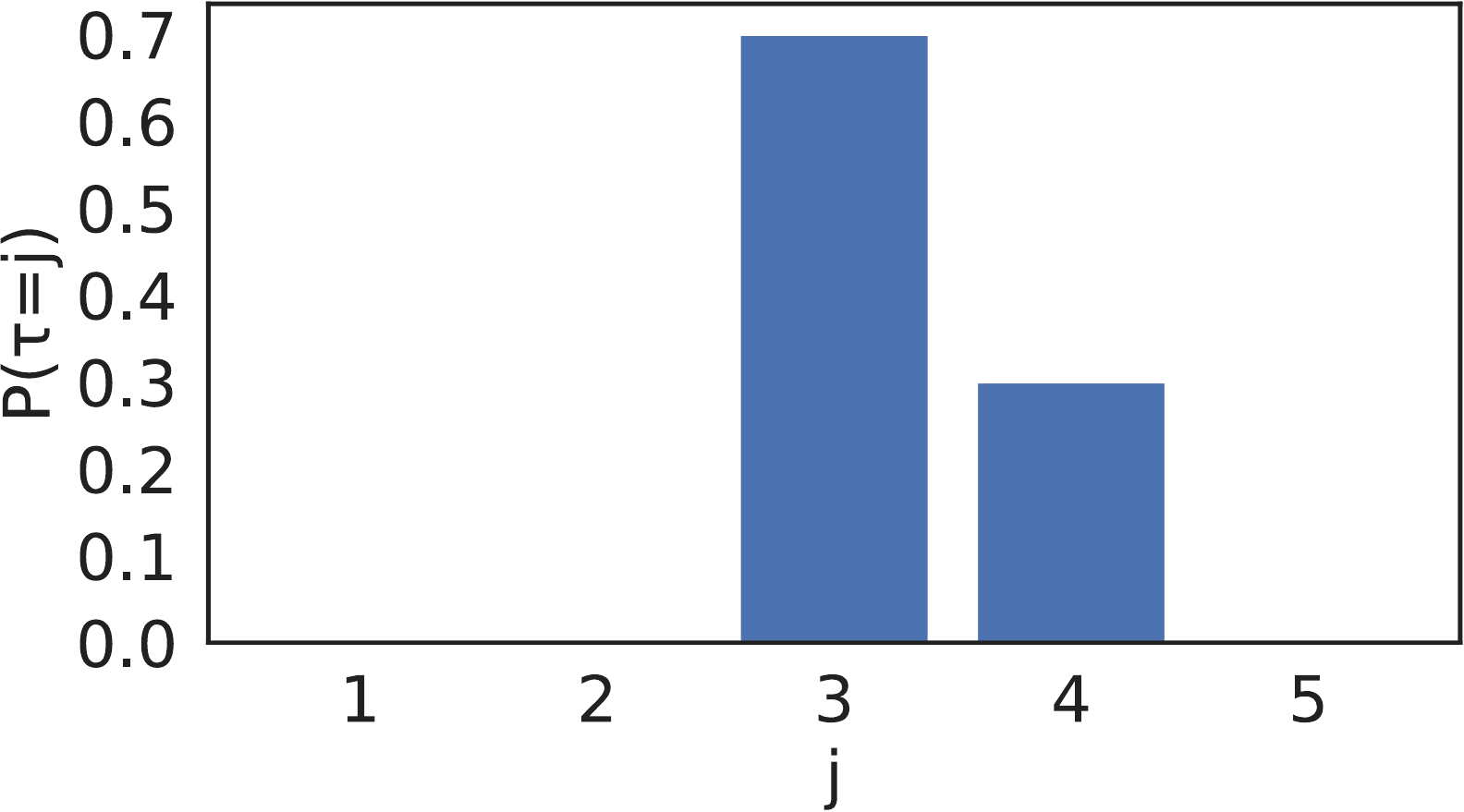}
        \end{subfigure}&

        \begin{subfigure}[b]{0.17\textwidth}
            \includegraphics[width=\textwidth]{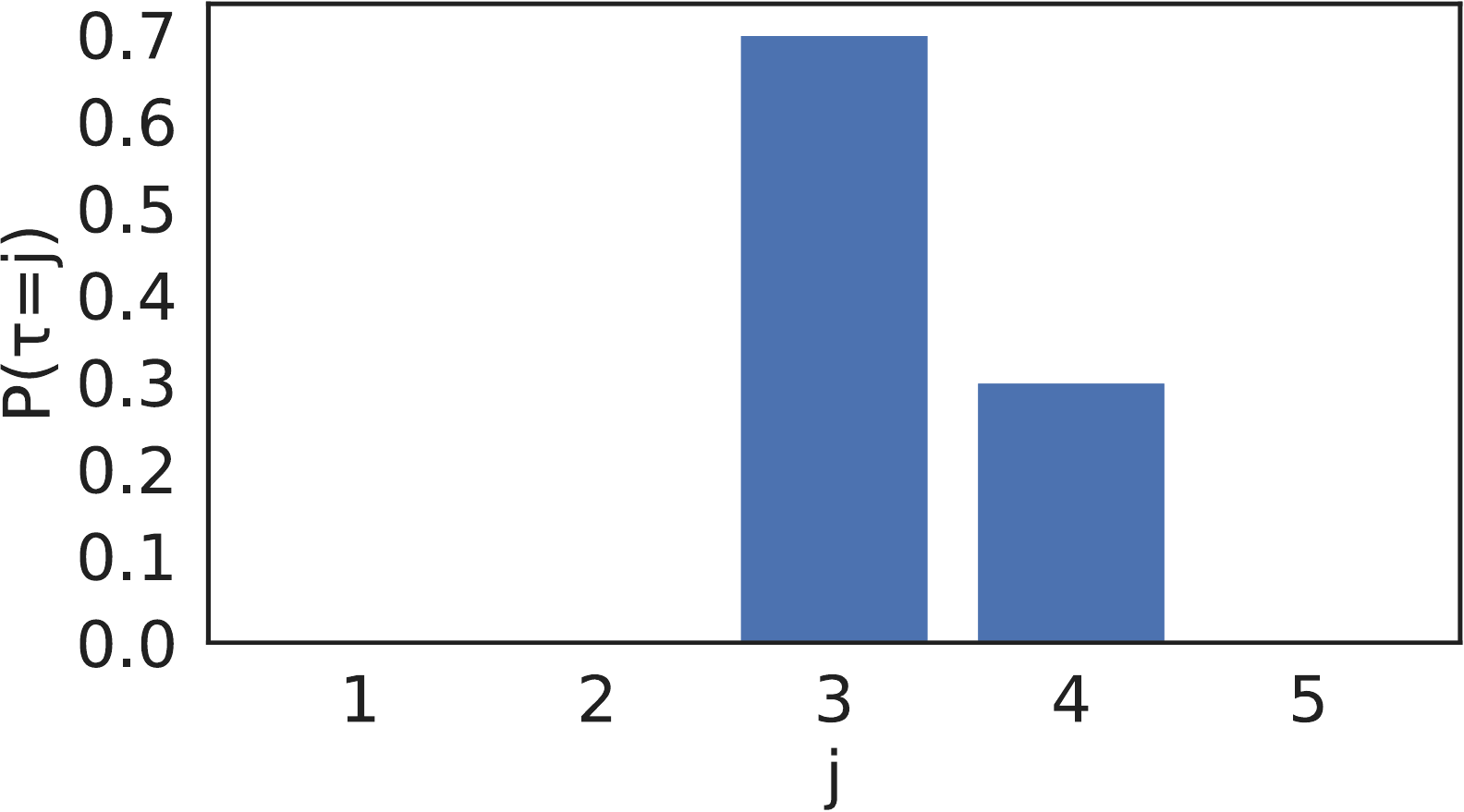}
        \end{subfigure}
    \end{tabular}
    \\
    
    \begin{tabular}{cc}
        \begin{subfigure}[b]{0.25\textwidth}
            \includegraphics[scale=0.25]{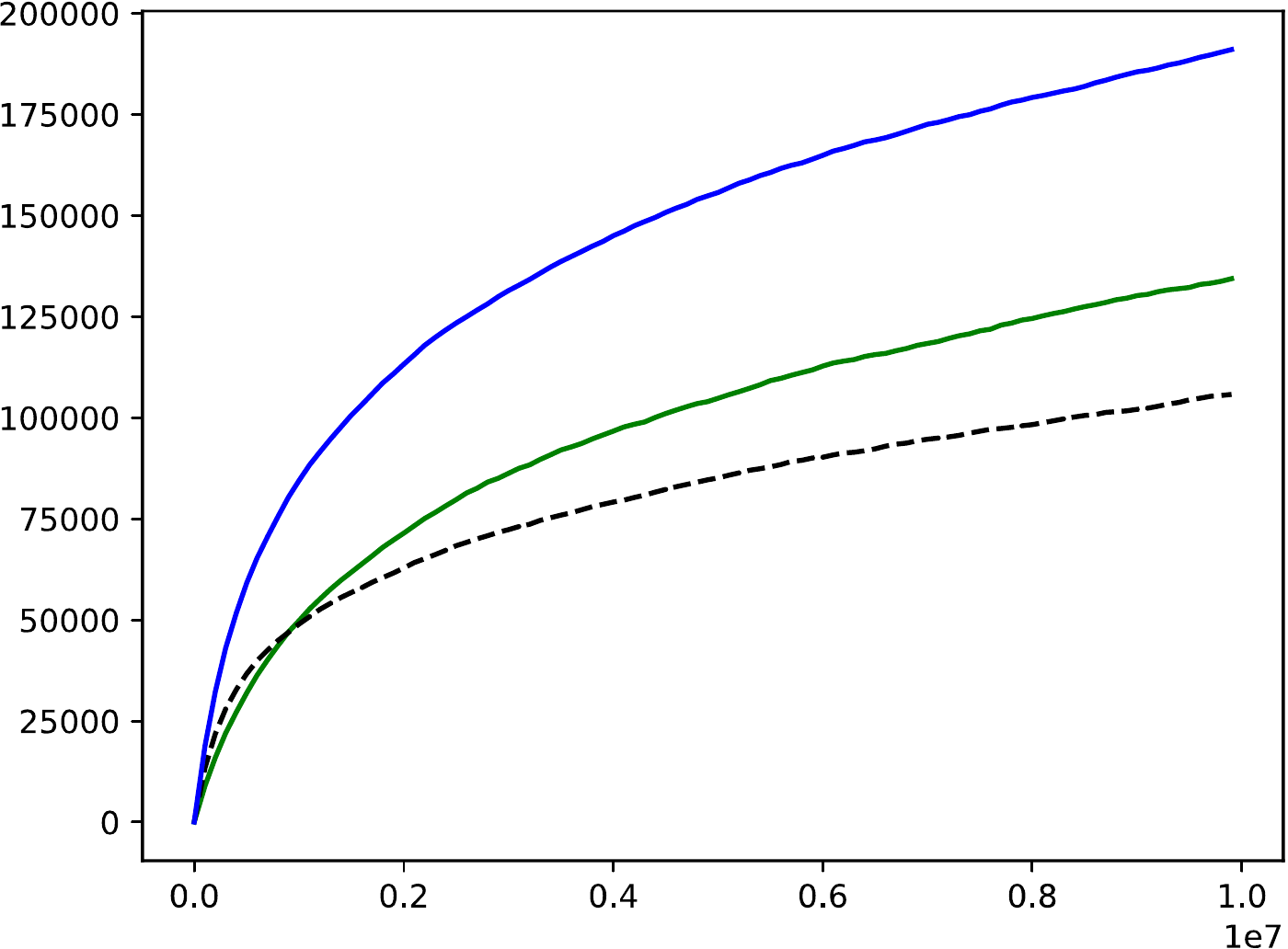}
    \caption{Case I: $D=5,K=3$}
    \label{fig:adswitch_case1}
        \end{subfigure}
         &          
         \begin{subfigure}[b]{0.25\textwidth}
            \includegraphics[scale=0.25]{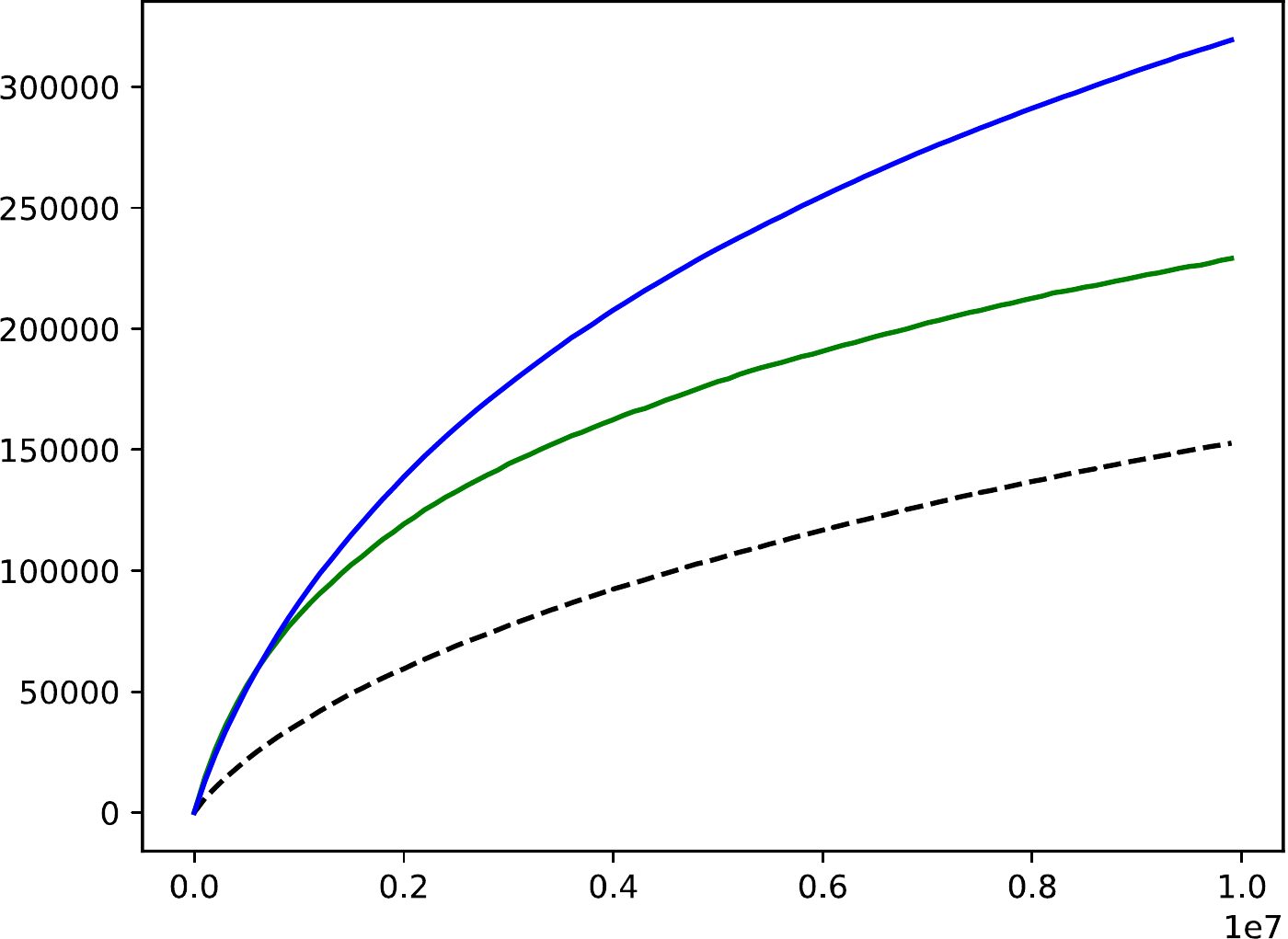}
    \caption{Case II: $D=5,K=3$}    
    \label{fig:adswitch_case2} \end{subfigure}

    \end{tabular}

    \end{tabular}
    \\
    \cline{2-2}

    \end{tabular}
    \caption{Cumulative Regret of \starucb{}}
    \end{figure*}

\subsection{Several macro-arms and one micro-arm}
By having only one micro-arm ($D=1$) and several macro-arms ($K > 1$), our problem reduces exactly to the standard MAB setting. Now, the algorithms simply need to pull the arm with highest $V_k$. A similar observation can be found in the setting when $D\geq 1$ and the delay distribution happens to be $P(\tau = 1) = 1$, i.e., any arm $j \in [D]$ will be the optimal arm as the epochs are completed in unit time (so the waiting time becomes insignificant here). Now, observe that the above two setups appear to be the same but are different in the perspective of  algorithms which compute confidence radii for all the $(k,j)$ pairs and select one among them. We decided to test the algorithms in the latter scenario with the following experimental setup. We took $K=3$ and $D=5$ with deterministic $\tau_k=1$ and $V_k$ being a Bernoulli random variable with success probabilities $0.5$, $0.7$, and $1$ respectively.
 
Note that for this case, our upper confidence bound cleanly reduces to the upper confidence bound of UCB1  \citep{Auer2002} for the standard stochastic multi-armed bandit problem, which is $a_{k,1}(s) = \sqrt{\frac{2 \log s}{N_{k,1}(s)}}$. Figure \ref{fig:exp_standard_mab} shows the results for this setting. 
It is evident from the figure that \starucb{} performs much better than  UCB-Simplex and Budget-UCB. 
 
We think this might be mainly due to the fact that the exploration term in \starucb{} scales at most by the number of micro-arms $j$ whereas, for UCB Simplex, the exploration term scales with the total number of Macro-Micro arm pairs $k \cdot D$. Due to this large exploratory factor, it takes longer for  UCB-Simplex to converge.
\subsection{Several macro- and micro-arms}

This setting is used in several of the real world applications discussed in Section~\ref{sec:application}. We performed a synthetic experiment inspired from  computational advertising. We have a set of ads belonging to $K = 3$ categories with $V_k$   their corresponding private  utility for an impression (click) and $D=5$ denotes the maximum waiting time for the ad before switching it. The delay distribution $\tau_k$ captures the time of impression of a user towards category $k$, which in a way  reflects their interest on that category $k$. Note that showing a relevant ad affects the time of impression, thereby establishing an implicit connection between reward and delay. Now, the goal of the learning algorithm is to show the ad that best fits the interest of the user and learn the optimal waiting time for their impression. 
 
We consider two cases for our experiments. In Case I, we set $V_k$ to be a Bernoulli random variable with success probabilities $0.7$, $1$, and $0.5$ respectively. In Case II, we keep the same delay $\tau_k$ which represent the same user's interest but only change the $V_k$ i.e., the publisher now receives different private utilities for different ad categories. 
The new Bernoulli random variable with success probabilities for the same categories are $1$, $0.5$, and $0.7$ respectively.  
Figures~\ref{fig:adswitch_case1} and \ref{fig:adswitch_case2} shows the learning behavior of \starucb{} in these scenarios and the bar graphs on top represent the $\tau_k$ distribution for each of the 3 categories.  

\section{Conclusion and Future work}
\label{sec:conclusion}

In this work, we introduced the Farewell to Arms framework of multi-armed bandit problems and presented a new algorithm, \starucb{}, along with a logarithmic problem-dependent regret bound of order $O((D^3/\Delta) \log T)$. In the case of a single macro-arm and when most micro-arms have low mean waiting times (much smaller than $D$), our regret bound improves to $O((D^2/\Delta) \log T)$; in this regime, the leading ($\log T$) term of our bound for \starucb{} is smaller by a factor of $D$ than the bounds of \cite{ding2013multi}, \cite{xia2016budgeted}, and \cite{flajolet2015logarithmic} 
for UCB-BV1, Budget-UCB, and UCB-Simplex respectively. However, in the opposite regime, the leading term (but not the constant term) of the bound for UCB-Simplex and Budget-UCB can be smaller than ours. Yet, our experiments show that \starucb{} empirically outperforms UCB-Simplex and Budget-UCB in a number of settings. Notably, in the standard bandit setting with only one arm, our algorithm collapses to be equal to a standard UCB algorithm, while this is not the case for the other algorithms. Indeed, our algorithm well-outperforms UCB-Simplex and Budget-UCB even in this simple setting.

In closing, we mention a few exciting directions for future work. First, it would be interesting (but challenging) to take into account that an \fta{} game potentially can have more feedback than mere bandit feedback. For example, if a learning agent decides to wait for 5 rounds, the learning agent also gains feedback for any shorter waiting time. This feedback could readily be used by \starucb{}, but proving improved regret bounds in light of this feedback is highly challenging. A second, further improvement would be to completely utilize the structure of the \fta{} game, which also includes conditional feedback. For instance, observe that if the learning algorithm decides to wait for $j \geq \tau$ rounds (for delay $\tau$), then the algorithm knows $\tau$ and hence also gets feedback for all the longer waiting times $j+1, \ldots, D$. Yet, if it does not wait for long enough (i.e., $j < \tau$), then this feedback will not be available.

\clearpage

\subsection*{Acknowledgements}

We thank Vamsi Potluru for lengthy discussions on this work at an early stage.
This work was supported by NSERC Discovery Grant RGPIN-2018-03942.

\bibliography{references}

\begin{thebibliography}{}

\bibitem[Agarwal et~al., 2011]{agarwal2011stochastic}
Agarwal, A., Foster, D.~P., Hsu, D.~J., Kakade, S.~M., and Rakhlin, A. (2011).
\newblock Stochastic convex optimization with bandit feedback.
\newblock In {\em Advances in Neural Information Processing Systems}, pages
  1035--1043.

\bibitem[Auer et~al., 2002]{Auer2002}
Auer, P., Cesa-Bianchi, N., and Fischer, P. (2002).
\newblock Finite-time analysis of the multiarmed bandit problem.
\newblock {\em Machine Learning}, 47(2):235--256.

\bibitem[Badanidiyuru et~al., 2013]{badanidiyuru2013bandits}
Badanidiyuru, A., Kleinberg, R., and Slivkins, A. (2013).
\newblock Bandits with knapsacks.
\newblock In {\em 2013 IEEE 54th Annual Symposium on Foundations of Computer
  Science}, pages 207--216. IEEE.

\bibitem[Bernstein, 1934]{bernstein1934teoriia}
Bernstein, S. (1934).
\newblock Teoriia veroiatnostei [{T}he theory of probabilities].
\newblock {\em Moskva--Leningrad: Gosudarstvennoe Tekhniko-Teoreticheskoe
  Izdatel’stvo.[2nd augmented ed. The 3rd ed., of the same year, is
  identical. The 4th ed., augmented, appeared in 1946.]}.

\bibitem[Blackwell, 1946]{blackwell1946equation}
Blackwell, D. (1946).
\newblock On an equation of {W}ald.
\newblock {\em The Annals of Mathematical Statistics}, 17(1):84--87.

\bibitem[Ding et~al., 2013]{ding2013multi}
Ding, W., Qin, T., Zhang, X.-D., and Liu, T.-Y. (2013).
\newblock Multi-armed bandit with budget constraint and variable costs.
\newblock In {\em Twenty-Seventh AAAI Conference on Artificial Intelligence}.

\bibitem[Flajolet and Jaillet, 2017]{flajolet2015logarithmic}
Flajolet, A. and Jaillet, P. (2017).
\newblock Logarithmic regret bounds for bandits with knapsacks.
\newblock {\em arXiv preprint arXiv:1510.01800v4}.

\bibitem[Gal et~al., 2007]{gal2007participation}
Gal, S., Landsberger, M., and Nemirovski, A. (2007).
\newblock Participation in auctions.
\newblock {\em Games and Economic Behavior}, 60(1):75--103.

\bibitem[Grimmett et~al., 2001]{grimmett2001probability}
Grimmett, G., Grimmett, G.~R., and Stirzaker, D. (2001).
\newblock {\em Probability and random processes}.
\newblock Oxford university press.

\bibitem[Joulani et~al., 2013]{joulani2013online}
Joulani, P., Gyorgy, A., and Szepesv{\'a}ri, C. (2013).
\newblock Online learning under delayed feedback.
\newblock In {\em International Conference on Machine Learning}, pages
  1453--1461.

\bibitem[Komiyama et~al., 2013]{komiyama2013multi}
Komiyama, J., Sato, I., and Nakagawa, H. (2013).
\newblock Multi-armed bandit problem with lock-up periods.
\newblock In {\em Asian Conference on Machine Learning}, pages 100--115.

\bibitem[Lai and Robbins, 1985]{lai1985asymptotically}
Lai, T.~L. and Robbins, H. (1985).
\newblock Asymptotically efficient adaptive allocation rules.
\newblock {\em Advances in Applied Mathematics}, 6(1):4--22.

\bibitem[Lattimore et~al., 2014]{lattimore2014learning}
Lattimore, T., Gy{\"o}rgy, A., and Szepesv{\'a}ri, C. (2014).
\newblock On learning the optimal waiting time.
\newblock In {\em International Conference on Algorithmic Learning Theory},
  pages 200--214. Springer.

\bibitem[Li et~al., 2017]{li2017hyperband}
Li, L., Jamieson, K., DeSalvo, G., Rostamizadeh, A., and Talwalkar, A. (2017).
\newblock Hyperband: A novel bandit-based approach to hyperparameter
  optimization.
\newblock {\em The Journal of Machine Learning Research}, 18(1):6765--6816.

\bibitem[McAfee and McMillan, 1987]{mcafee1987auctions}
McAfee, R.~P. and McMillan, J. (1987).
\newblock Auctions with entry.
\newblock {\em Economics Letters}, 23(4):343--347.

\bibitem[Samuelson, 1985]{samuelson1985competitive}
Samuelson, W.~F. (1985).
\newblock Competitive bidding with entry costs.
\newblock {\em Economics letters}, 17(1-2):53--57.

\bibitem[Stegeman, 1996]{stegeman1996participation}
Stegeman, M. (1996).
\newblock Participation costs and efficient auctions.
\newblock {\em Journal of Economic Theory}, 71(1):228--259.

\bibitem[Tran-Thanh et~al., 2012]{tran2012knapsack}
Tran-Thanh, L., Chapman, A., Rogers, A., and Jennings, N.~R. (2012).
\newblock Knapsack based optimal policies for budget--limited multi--armed
  bandits.
\newblock In {\em Twenty-Sixth AAAI Conference on Artificial Intelligence}.

\bibitem[Weed et~al., 2016]{weed2016online}
Weed, J., Perchet, V., and Rigollet, P. (2016).
\newblock Online learning in repeated auctions.
\newblock In {\em Conference on Learning Theory}, pages 1562--1583.

\bibitem[Xia et~al., 2016]{xia2016budgeted}
Xia, Y., Ding, W., Zhang, X.-D., Yu, N., and Qin, T. (2016).
\newblock Budgeted bandit problems with continuous random costs.
\newblock In {\em Asian Conference on Machine Learning}, pages 317--332.

\end{thebibliography}

\newpage

\onecolumn

\appendix
\section{Proof of Theorem~\ref{thm:exp-cumulative-reward}}
\label{proof_thm1}

The proof of Theorem~\ref{thm:exp-cumulative-reward} relies on the following two fundamental results.

\begin{theorem}[Wald's identity \citep{blackwell1946equation}] \label{thm:wald}
Let $X_1, X_2,\ldots, X_n$ be i.i.d.~random variables having finite mean (so $\E \bigl[ |X_1| \bigr] < \infty$), and $L$ be a stopping time with respect to the filtration $\mathcal{F}_n $ (i.e., $\{L \leq n \} \in \mathcal{F}_n \quad \forall  n \in \mathbb{N}$) satisfying $\E[L] < \infty $. Then
\begin{align*}
\E [ X_1 + \ldots + X_L ] = \E[L] \cdot \E[X_1] .
\end{align*}
\NoEndMark
\end{theorem}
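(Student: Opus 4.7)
The plan is to prove Wald's identity via the standard indicator-and-Fubini argument. First I would rewrite the random sum as an infinite series over all $n$ using indicators of the stopping event:
\[
\sum_{n=1}^L X_n = \sum_{n=1}^\infty X_n \cdot \mathbf{1}[L \geq n] .
\]
Taking expectations, the substantive work is to justify the interchange of expectation and the infinite sum, and then to evaluate each summand using the defining property of the stopping time.

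For the interchange, I would invoke Fubini--Tonelli. First apply it to the absolute-value version of the integrand:
\[
\E\!\left[\sum_{n=1}^\infty |X_n|\, \mathbf{1}[L \geq n]\right] = \sum_{n=1}^\infty \E\!\left[|X_n|\, \mathbf{1}[L \geq n]\right] .
\]
The event $\{L \geq n\} = \{L \leq n-1\}^c$ lies in $\mathcal{F}_{n-1}$ (by the stopping-time property), and, taking $\mathcal{F}_{n-1}$ to be the natural filtration generated by $X_1, \ldots, X_{n-1}$ as is standard in this setting, $X_n$ is independent of $\mathcal{F}_{n-1}$. Hence $\E[|X_n|\, \mathbf{1}[L \geq n]] = \E[|X_1|] \cdot \Pr(L \geq n)$. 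Summing and using the elementary identity $\sum_{n \geq 1} \Pr(L \geq n) = \E[L]$ for nonnegative integer-valued $L$, the total equals $\E[|X_1|] \cdot \E[L] < \infty$ by the two hypotheses $\E[|X_1|] < \infty$ and $\E[L] < \infty$. This absolute summability licenses the full Fubini theorem on the signed integrand $X_n \,\mathbf{1}[L \geq n]$.

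With the interchange justified, I would rerun the identical calculation without absolute values to conclude
\[
\E\!\left[\sum_{n=1}^L X_n\right] = \sum_{n=1}^\infty \E[X_n\, \mathbf{1}[L \geq n]] = \E[X_1] \sum_{n=1}^\infty \Pr(L \geq n) = \E[X_1] \cdot \E[L] .
\]
The main obstacle is precisely the Fubini/integrability step, and it is where both hypotheses $\E[|X_1|] < \infty$ and $\E[L] < \infty$ are genuinely needed: dropping either can make the series of expectations diverge and the identity fail. A minor bookkeeping point to address is the tacit assumption that $\mathcal{F}_n = \sigma(X_1, \ldots, X_n)$ so that the required independence between $\mathbf{1}[L \geq n]$ and $X_n$ holds; if the filtration is richer, the same conclusion follows provided $X_n$ is independent of $\mathcal{F}_{n-1}$, which is the natural assumption in the usages of this theorem in the paper.
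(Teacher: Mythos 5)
Your proof is correct, but there is nothing in the paper to compare it against: the paper states this result (Wald's identity) as a known theorem, cited from Blackwell (1946), and uses it as a black box in the proof of Theorem~\ref{thm:exp-cumulative-reward}; no proof is given there. What you wrote is the standard indicator--Fubini argument, and every step checks out: the pointwise identity $\sum_{n=1}^{L} X_n = \sum_{n=1}^{\infty} X_n \ind{L \geq n}$ (valid because $\E[L] < \infty$ forces $L < \infty$ almost surely), Tonelli applied to the nonnegative version to establish absolute summability $\E\bigl[|X_1|\bigr] \cdot \E[L] < \infty$, and then Fubini on the signed version together with $\sum_{n \geq 1} \Pr(L \geq n) = \E[L]$. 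Your closing remark also deserves emphasis, because it is more than bookkeeping: the theorem as stated in the paper assumes only $\{L \leq n\} \in \mathcal{F}_n$ for an otherwise arbitrary filtration, and that alone is \emph{not} sufficient for the identity. If the filtration can anticipate the sequence --- say $\mathcal{F}_n = \sigma(X_1, \ldots, X_{n+1})$, with $L$ chosen to stop whenever the next increment is negative --- then $\E\bigl[X_n \ind{L \geq n}\bigr] \neq \E[X_1] \cdot \Pr(L \geq n)$ and the conclusion fails. The hypothesis you isolate, that $X_n$ is independent of $\mathcal{F}_{n-1}$ (automatic when $\mathcal{F}_n$ is the natural filtration of the sequence), is exactly the condition under which the key factorization step is legitimate, so your proof in fact repairs a looseness in the statement as written.
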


\begin{theorem}[Doob's optional stopping theorem \citep{grimmett2001probability}]
\label{thm:doob}
Let $M_n$ be a martingale with respect to the filtration $\mathcal{F}_n$ and let $L$ be a stopping time. Suppose the following three conditions hold:
\begin{itemize}
\item[(a)] $P(L < \infty) = 1$;
\item[(b)] $\E \bigl[ |M_L| \bigr] < \infty$;
\item[(c)] $\E \bigl[ M_n \cdot \ind{L>n} \bigr] \rightarrow 0$ as $n \rightarrow \infty $.
\end{itemize}
Then $\E [M_L] = \E [M_0]$.
\NoEndMark
\end{theorem}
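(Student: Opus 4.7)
The plan is to apply the optional stopping theorem in its easy form for bounded stopping times, namely that $\E \bigl[ M_{n \wedge L} \bigr] = \E [ M_0 ]$ for each fixed $n$, and then send $n \to \infty$, using the three hypotheses to justify passing to the limit. Here $n \wedge L := \min\{n, L\}$ is itself a bounded stopping time, and it is a standard fact that $(M_{n \wedge L})_{n \geq 0}$ remains a martingale with respect to $\mathcal{F}_n$, from which $\E \bigl[ M_{n \wedge L} \bigr] = \E [ M_0 ]$ follows for every $n \in \mathbb{N}$.

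Next I would decompose the stopped process as
\begin{align*}
M_{n \wedge L} = M_L \cdot \ind{L \leq n} + M_n \cdot \ind{L > n},
\end{align*}
and take expectations on both sides. Combining with the identity from the previous paragraph gives
\begin{align*}
\E [ M_0 ] = \E \bigl[ M_L \cdot \ind{L \leq n} \bigr] + \E \bigl[ M_n \cdot \ind{L > n} \bigr].
\end{align*}
The strategy is then to take $n \to \infty$ on the right-hand side and identify both limits.

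For the second term, condition (c) gives directly that $\E \bigl[ M_n \cdot \ind{L > n} \bigr] \to 0$. For the first term, I would invoke dominated convergence applied to the sequence $f_n := M_L \cdot \ind{L \leq n}$: condition (a) gives $L < \infty$ almost surely, so $\ind{L \leq n} \to 1$ pointwise a.s., and hence $f_n \to M_L$ pointwise a.s.; the integrable dominating function is $|M_L|$ itself, whose finite expectation is guaranteed by condition (b). Therefore $\E \bigl[ M_L \cdot \ind{L \leq n} \bigr] \to \E [ M_L ]$, and the identity reduces to $\E [ M_0 ] = \E [ M_L ]$, as desired.

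The only real subtlety is the bounded-stopping-time step at the start. Its verification is classical: write
\begin{align*}
M_{n \wedge L} - M_0 = \sum_{k=0}^{n-1} (M_{k+1} - M_k) \cdot \ind{L > k},
\end{align*}
and note that $\{L > k\}$ is the complement of $\{L \leq k\} \in \mathcal{F}_k$, so $\ind{L > k}$ is $\mathcal{F}_k$-measurable and bounded. Thus each summand has zero conditional expectation given $\mathcal{F}_k$ by the martingale property, and taking expectations yields $\E \bigl[ M_{n \wedge L} \bigr] = \E [ M_0 ]$. Aside from that measurability bookkeeping, the proof is just dominated convergence together with the three listed hypotheses, so the main (mild) obstacle is simply being careful with measurability and integrability in the initial reduction.
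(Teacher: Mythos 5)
Your proof is correct, and it is the classical argument for this result: stop the martingale at $n \wedge L$ to get $\E[M_{n\wedge L}] = \E[M_0]$, decompose $M_{n \wedge L} = M_L \cdot \ind{L \leq n} + M_n \cdot \ind{L > n}$, kill the second term with hypothesis (c), and handle the first with dominated convergence using (a) for pointwise convergence and (b) for the dominating function $|M_L|$. The paper does not prove this statement at all --- it is imported as a known theorem with a citation to Grimmett and Stirzaker, and the argument you give is essentially the proof found in that reference, so there is no gap and nothing substantive to compare.
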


With the above results at hand, we now prove Theorem~\ref{thm:exp-cumulative-reward}.

\begin{proof}[of Theorem~\ref{thm:exp-cumulative-reward}]

Recall that, for any epoch $s$, we have $\cost{j}{s} = \min\{j, \tau_s\} \leq j$.
Let $w$ be the mean waiting time for micro-arm $j$, so that
\begin{align*}
w := \E [ \cost{j}{1} ] = \E \bigl[ \min\{j, \tau_1\} \bigr] = \sum_{k=1}^{j-1} k \cdot \Pr(\tau_1 = k) + j \cdot \Pr(\tau_1 \geq j) .
\end{align*}
Let $S_n := \sum_{s=1}^n \cost{j}{s}$ be the sum of the waiting times when arm $j$ is pulled for the first $n$ epochs, and let $S_0 := 0$.

Because the waiting times $\cost{j}{s}$ are stochastic, the number of epochs before the game ends (i.e.~before the budget is depleted) is random. Let $L$ be the stopping time with respect to the filtration $(\mathcal{F}_n)_{n \geq 0}$, defined as
\begin{align}\label{lastepoch}
L = \max \{ n: S_n \leq T \} .
\end{align}

Now, from Wald's identity (Theorem~\ref{thm:wald}), the total cumulative reward obtained by the constant policy that always pulls arm $j$ under a time budget $T$ is equal to
\begin{align}\label{eqn:wald}
\E \left[ \sum_{s=1}^L \reward{j}{s} \right] = \E[ \reward{j}{1} ] \cdot \E[L] ,
\end{align}
where we recall that $\reward{j}{s} = V_s \cdot \ind{\tau_s \leq j}$ is the reward for pulling arm $j$ in epoch $s$. 

Define a martingale $(M_n)_{n \geq 0}$ by $M_n := S_n - n \cdot w$. Since each $\cost{j}{s}$ lies in the interval $[1, j]$, the conditions of Doob's optional stopping theorem (Theorem~\ref{thm:doob}) hold, and so
$\E[M_L] = \E[M_0] = 0$. 
Consequently,
we have
\begin{align*}
\E[M_L] = \E[S_L - L \cdot w] = 0 .
\end{align*}
Next, on the one hand, $S_L \leq T$ trivially holds. On the other hand, since $\cost{j}{s} \leq j$ for any epoch $s$, we have $S_L > T - j$ as otherwise $L$ cannot be the last epoch. Therefore
\begin{align*}
T - j < \E [ L ] \cdot w \leq T ,
% \E[T - L \cdot w] = 0
\end{align*}
and hence
\begin{align}
\frac{T - j}{w} < \E[L] \leq \frac{T}{w} . \label{exp_last_epoch}
\end{align}
Finally, combining \eqref{eqn:wald} and \eqref{exp_last_epoch} yields
\begin{align*}
\E[ \reward{j}{1} ] \cdot \frac{T - j}{w}
< \E \left[ \sum_{s=1}^L \reward{j}{s} \right]
=\E[ \reward{j}{1} ] \cdot \frac{T}{w} ,
\end{align*}
implying the result.
\end{proof}

\section{Proof of Lemma~\ref{lemma:ratio-improved} and an inversion corollary}
\label{app:proof-of-ratio-improved-lemma}

Before proving Lemma~\ref{lemma:ratio-improved}, we first develop and prove a useful supporting lemma.

\begin{lemma} \label{lemma:deviation_inequality:}
Let $Y$ be a random variable taking values in $[1, B]$. Consider a sample $Y_1, \ldots, Y_n$ of independent copies of $Y$. Let $\hat{Y}$ denote the sample mean of $Y_1, \ldots, Y_n$ and $\mu_Y := \E[Y]$.
Take $\delta \in [0, 1]$, Then with probability at least $1 - \delta$ over the sample,
\begin{align}
\left| {\frac{\hat{Y} - \mu_Y}{\mu_Y} }\right|
\leq \sqrt{\frac{(B - 1) \log \frac{2}{\delta}}{2 n}} + \frac{2(B-1) \log \frac{2}{\delta}}{3 \mu_Y n} . \label{eqn:onlybernsteindeviation}
\end{align}
\NoEndMark
\end{lemma}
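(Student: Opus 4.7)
The plan is to invoke a Bernstein-style deviation inequality on $|\hat{Y} - \mu_Y|$ and then divide through by $\mu_Y \geq 1$. Two ingredients drive the bound: a sharp $\mu_Y$-aware variance estimate, and a routine elementary inequality to absorb the remaining $\mu_Y$-dependence.

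First I would establish the variance bound $\Var(Y) \leq (B-1)(\mu_Y - 1)$. This follows from $Y - 1 \in [0, B-1]$: pointwise $(Y-1)^2 \leq (B-1)(Y-1)$, so taking expectations together with $\Var(Y) = \Var(Y-1) \leq \E[(Y-1)^2]$ yields the claim. Next, since $|Y - \mu_Y| \leq B - 1$ almost surely, I would apply the (inverted) two-sided Bernstein inequality at confidence $1 - \delta$ using this variance bound, to obtain
\[
|\hat{Y} - \mu_Y| \leq \sqrt{\frac{2(B-1)(\mu_Y - 1) \log(2/\delta)}{n}} + \frac{2(B-1) \log(2/\delta)}{3 n}.
\]

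Finally, I would divide both sides by $\mu_Y \geq 1$. The second term cleanly becomes $\frac{2(B-1)\log(2/\delta)}{3 n \mu_Y}$, matching the target. For the first term, dividing by $\mu_Y$ and pulling it inside the square root gives $\sqrt{\frac{2(B-1)(\mu_Y - 1)\log(2/\delta)}{n \mu_Y^2}}$; I would then invoke the trivial inequality $(\mu_Y - 2)^2 \geq 0$, which rearranges to $\frac{2(\mu_Y - 1)}{\mu_Y^2} \leq \frac{1}{2}$, to obtain the bound $\sqrt{\frac{(B-1)\log(2/\delta)}{2 n}}$, exactly matching the statement.

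I do not anticipate any real obstacle. The one subtle design choice is the variance bound in the first step: it must scale linearly in $\mu_Y - 1$ so that, after division by $\mu_Y^2$, the ratio $(\mu_Y - 1)/\mu_Y^2$ is uniformly bounded by $1/4$; cruder bounds like the Popoviciu estimate $(B-1)^2/4$ would leave an explicit $1/\mu_Y$ inside the leading square-root term and break the form of the conclusion. Once that observation is made, the rest is pure algebra.
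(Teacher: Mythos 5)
Your proposal is correct and follows essentially the same route as the paper's proof: both rest on Bernstein's inequality with the variance bound $\Var(Y) \leq (B-1)(\mu_Y-1)$ (the paper cites Bhatia--Davis, you derive it pointwise) and the elementary bound $(\mu_Y-1)/\mu_Y^2 \leq 1/4$. The only cosmetic difference is that you invert Bernstein first and then divide by $\mu_Y$, whereas the paper normalizes by $\mu_Y$ inside the exponent before inverting.
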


\begin{proof}
In order to get a tight concentration inequality for our problem, we will be applying Bernstein's inequality.
\\ \\ \textbf{Fact 3} (Bernstein's Inequality \citep{bernstein1934teoriia}) \textit{Assume that $Z_1, \ldots, Z_n$ are centered i.i.d. random variables satisfying $|Z_j| \leq b$ and $\Var[Z_j] \leq \sigma^2$ for all $j \in [n]$. Then}
\begin{align*}
\Pr \left( \frac{1}{n} \sum_{j=1}^n Z_j \geq t \right)
\leq \exp \left( -\frac{n t^2}{2 \left( \sigma^2 + \frac{b}{3} t \right)} \right) .
\end{align*}

We begin by rewriting the left-hand side of \eqref{eqn:onlybernsteindeviation}  as follows:
\begin{align*}
\frac{\left| \hat{Y} - \mu_Y \right|}{\mu_Y}
= \left| \frac{\hat{Y}}{\mu_Y} - 1 \right| .
\end{align*}
Next, observe that
\begin{align*}
\Var[Y] = \E [ Y^2 ] - (\E [ Y ])^2
\leq (B-1)(\mu_Y-1) ,
\end{align*}
where we applied the Bhatia-Davis inequality and used the fact that $\E [ | Y | ] = \E [ Y ] = \mu_Y$. The last inequality may be coarse, but it turns out to simplify some things down the road.

Taking $Z_j = Y_j - \E [ Y_j ]$, we may apply Bernstein's inequality with $b = B - 1$ and $\sigma^2 = (B- \mu_Y)(\mu_Y - 1)$, yielding
\begin{align*}
\Pr \left( \left| \frac{\hat{Y}}{\mu_Y} - 1 \right| \geq t \right)
= \Pr \left( \left| \hat{Y} - \mu_Y \right| \geq \mu_Y t \right)
&\leq 2 \exp \left( -\frac{n \mu_Y^2 t^2}{2 ((\mu_Y-1) (B - 1) + \frac{B-1}{3} \mu_Y t)} \right) \\
&= 2 \exp \left( -\frac{n t^2}{2 (\frac{(B - 1)(\mu_Y-1)}{\mu_Y^2} + \frac{B-1}{3 \mu_Y} t)} \right) .
\end{align*}
Also, we can upper bound $\frac{(B-1)(\mu_Y -1)}{\mu_Y^2}$ by $\frac{(B-1)}{4}$, so that the above probability is at most
\begin{align*}
2 \exp \left( -\frac{n t^2}{2 (\frac{(B - 1)}{4} + \frac{B-1}{3 \mu_Y} t)} \right) .
\end{align*}
We can recover the discrepancy $t$ by inverting the above equation (by setting the failure probability to $\delta$), yielding
\begin{align*}
\log \left( \frac{2}{\delta}\right) = \frac{n t^2}{\frac{(B-1)}{2} + \frac{2(B-1)}{3 \mu_Y} t} \,\,\,\, .
\end{align*}
Solving for t, we have that with probability at least $1 - \delta$,
\begin{align*}
t \leq \sqrt{\frac{(B - 1) \log \frac{2}{\delta}}{2 n}} + \frac{2(B-1) \log \frac{2}{\delta}}{3 \mu_Y n} .
\end{align*}
Therefore, as desired, with probability at least $1 - \delta$,
\begin{align*}
\left| {\frac{\hat{Y} - \mu_Y}{\mu_Y} }\right|
\leq \sqrt{\frac{(B - 1) \log \frac{2}{\delta}}{2 n}} + \frac{2(B-1) \log \frac{2}{\delta}}{3 \mu_Y n} \,\,\,\, .
\end{align*}
\end{proof}

\begin{proof}[of Lemma~\ref{lemma:ratio-improved}]
Define $\mu_X := \E [ X ]$ and $\mu_Y := \E [ Y ]$. We begin with the rewrite
\begin{align*}
\frac{\mu_X}{\mu_Y}
= \frac{\hat{X}}{\mu_Y} + \frac{\mu_X - \hat{X}}{\mu_Y}
= \frac{\hat{X}}{\hat{Y}}
+ \left( \frac{\hat{X}}{\mu_Y} - \frac{\hat{X}}{\hat{Y}} \right)
+ \frac{\mu_X - \hat{X}}{\mu_Y} .
\end{align*}
We bound the second and third terms in turn.

First, observe that
\begin{align*}
\left| \frac{\hat{X}}{\mu_Y} - \frac{\hat{X}}{\hat{Y}} \right|
= \left| \hat{X} \left( \frac{\hat{Y} - \mu_y}{\mu_Y \, \hat{Y}} \right) \right|
\leq \left| \frac{\hat{Y} - \mu_y}{\mu_y} \right| ,
\end{align*}
where the inequality is from the assumptions that $X \in [0, 1]$ and $Y \geq 1$.
Now, from Lemma~\ref{lemma:deviation_inequality:} (stated and proved immediately after this result), we have that with probability at least $1 - \delta$,
\begin{align*}
\left| \frac{\hat{Y} - \mu_Y}{\mu_Y} \right|
\leq \sqrt{\frac{(B - 1) \log \frac{2}{\delta}}{2 n}} + \frac{2(B-1) \log \frac{2}{\delta}}{3 \mu_Y n} .
\end{align*}

Second, again using $Y \geq 1$, we have that
\begin{align*}
\left| \frac{\mu_X - \hat{X}}{\mu_Y} \right|
\leq \left| \mu_X - \hat{X} \right| ;
\end{align*}
this term can be controlled using Hoeffding's inequality, where we now use that $X \in [0, 1]$, this time yielding a deviation of size $\sqrt{\frac{\log \frac{2}{\delta}}{2 n}}$. The result follows from a union bound.
\end{proof}

Next, we state a useful corollary of Lemma~\ref{lemma:ratio-improved}. The setup is identical to that of Lemma~\ref{lemma:ratio-improved}, but we restate the setup for the convenience of the reader. This corollary is simply an inversion of the aforementioned lemma.

\begin{corollary} \label{cor:inversion}
Let $X, Y$ be (possibly dependent) random variables with joint distribution $P$.
Consider a sample $(X_1, Y_1), \ldots, (X_n, Y_n)$ of independent copies of $(X, Y) \sim P$. Assume that $X$ takes values in $[0, 1]$, and $Y$ takes values in $[1, B]$. Define $\mu_Y := \E[Y]$ and let $\hat{X}$ denote the sample mean of $X_1, \ldots, X_n$ (likewise for $\hat{Y}$ and $Y_1, \ldots, Y_n$). For any $\epsilon > 0$, we have
\begin{align}\label{eqn:prob-inversion-new}
\Pr \left( \left| \dfrac{\hat{X}}{\hat{Y}} - \dfrac{\E [X]}{\E [Y]} \right| \geq \epsilon \right) \leq 4 \exp \left( -\left[  \frac{-\frac{(\sqrt{B-1}+1)}{\sqrt{2n}}+ \sqrt{ \left(\frac{(\sqrt{B-1}+1)}{\sqrt{2n}} \right)^2 + \frac{8(B-1) \epsilon}{3 n}} }{\frac{4(B-1)}{3n}} \right]^2 \right)
\end{align}
\NoEndMark
\end{corollary}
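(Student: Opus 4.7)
The plan is to obtain Corollary~\ref{cor:inversion} by directly inverting the high-probability bound of Lemma~\ref{lemma:ratio-improved}. The first step is a small simplification: since $Y \in [1,B]$ we have $\mu_Y \geq 1$, so the middle term $\frac{2(B-1)\log(4/\delta)}{3\mu_Y n}$ in the lemma is upper-bounded by $\frac{2(B-1)\log(4/\delta)}{3n}$. Writing $L := \log(4/\delta)$ and combining the two $\sqrt{L}$ terms, Lemma~\ref{lemma:ratio-improved} can then be rephrased as: for every $\delta \in [0,1]$, with probability at least $1 - \delta$,
\begin{align*}
\left|\frac{\hat X}{\hat Y} - \frac{\E[X]}{\E[Y]}\right|
\;\leq\; \frac{2(B-1)}{3 n}\, L \;+\; \frac{\sqrt{B-1}+1}{\sqrt{2 n}}\, \sqrt{L}.
\end{align*}

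The second step is purely algebraic. Given a target deviation $\epsilon > 0$, I would determine the smallest $L$ for which the right-hand side above is at least $\epsilon$, by setting the two sides equal. Letting $x := \sqrt{L}$, $a := \frac{2(B-1)}{3n}$, and $b := \frac{\sqrt{B-1}+1}{\sqrt{2n}}$, this becomes the quadratic $a x^{2} + b x - \epsilon = 0$. Because $x \geq 0$, the relevant root is the positive one,
\begin{align*}
x^* \;=\; \frac{-b + \sqrt{b^2 + 4 a \epsilon}}{2 a}.
\end{align*}
Substituting the explicit forms of $a$ and $b$ reproduces exactly the expression inside the brackets of~\eqref{eqn:prob-inversion-new}.

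The third step ties the algebra back to probability. The displayed deviation bound is strictly increasing in $L$, so it exceeds $\epsilon$ if and only if $L > (x^*)^2$. Contrapositively, choosing $\delta$ so that $\log(4/\delta) = (x^*)^2$, i.e.\ $\delta = 4 \exp(-(x^*)^2)$, the event $\bigl|\hat X/\hat Y - \E[X]/\E[Y]\bigr| \geq \epsilon$ has probability at most this $\delta$. Writing out $\delta$ gives precisely \eqref{eqn:prob-inversion-new}.

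There is no real obstacle here: all the probabilistic content already lives in Lemma~\ref{lemma:ratio-improved}, and the rest is a monotone inversion of a quadratic-in-$\sqrt{L}$ bound. The only items requiring a bit of care are (i) using $\mu_Y \geq 1$ cleanly so that the coefficient matches $\frac{4(B-1)}{3n}$ in the denominator of the corollary, and (ii) selecting the positive root of the quadratic, which is legitimate because $\sqrt{L}$ must be nonnegative.
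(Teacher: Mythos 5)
Your proposal is correct and follows exactly the route the paper intends: the paper's entire stated proof is ``by inversion and simply involves solving a quadratic equation,'' and your write-up fills in precisely those steps (bounding $\mu_Y \geq 1$ to merge the Bernstein term's coefficient into $\frac{2(B-1)}{3n}$, combining the two square-root terms, solving the quadratic in $\sqrt{\log(4/\delta)}$ via the positive root, and reading off $\delta = 4\exp(-(x^*)^2)$). The algebra checks out against \eqref{eqn:prob-inversion-new}, with $4a\epsilon = \frac{8(B-1)\epsilon}{3n}$ and $2a = \frac{4(B-1)}{3n}$ matching the displayed expression.
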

The proof is by inversion and simply involves solving a quadratic equation.

\section{Proof of Lemma~\ref{lemma:exp-sub-pulls}}
\label{app:proof-of-exp-sub-pulls-lemma}

For convenience, we recall a quantity defined in the main text which will be used in the proof.
For all $(k,j) \in [K] \times [D]$ and any $s \geq 1$,
\begin{align*}
\updev{k}{j}{s} = \alpha_j \frac{\log s}{\npulls{k,j}{s}} + \beta_j \sqrt{\frac{\log s}{\npulls{k,j}{s}}} .
\end{align*}

\begin{proof}[of Lemma~\ref{lemma:exp-sub-pulls}]

Let $\gtrue{k}{j}$ be the ratio of expected reward to expected waiting time for a pull of arm pair $(k,j)$, and let $(k^*,j^*)$ be the optimal arm pair, so that $\gtrue{k^*}{j^*} =\max_{(k,j) \in [K] \times [D]} \gtrue{k}{j}$. Let $\npulls{k,j}{s}$ be the number of pulls of arm pair $(k,j)$ until the end of epoch $s$. It will be useful to (implicitly) define a function $\hat{h}_{k,j}(\cdot)$ as $\hhat{k}{j}{\npulls{k,j}{s}} := \ghat{k}{j}{s}$; this function gives the empirical reward per round of arm pair $(k,j)$ for $\npulls{k,j}{s}$ pulls. Let $\confrad{k}{j}{\npulls{k,j}{s}} := a_{k,j}(s)$ be the confidence radius of arm $(k,j)$ for $\npulls{k,j}{s}$ pulls.
Let $i_s$ denote the arm pair pulled in epoch $s$. Recall that $L$ is the stopping time for the game.
The number of pulls of suboptimal arm pair $(k,j)$ with $\gap{k}{j} > 0$ in $L$ epochs is
\begin{align}
\npulls{k,j}{L}  =  \sum_{s=1}^L \ind{ i_s =  (k,j) } . \label{eqn:njs}
\end{align}
Since the time consumed in each epoch is stochastic (depending on the delay random variable $\tau_s \sim Q$), we first upper bound the random stopping time $L$ by $T$; this is possible because each epoch lasts for at least one round.
This upper bound, combined with the fact that each micro-arm is pulled once in the first $D$ rounds, implies that \eqref{eqn:njs} is at most
\begin{align*}
\sum_{s=1}^T \ind{i_s= (k,j)}
= 1 + \sum_{s=D+1}^T \ind{i_s= (k,j)} .
\end{align*}
Let $l$ be an arbitrary integer. We proceed by decomposing the second term into two sampling regimes. When $\npulls{k,j}{L} < l$, we say that the sub-optimal arm $(k,j)$ is in the under-sampled regime and if $\npulls{k,j}{L} \geq l$, we say that the sub-optimal arm $(k,j)$ is in the sufficiently sampled regime. The tuning of the value of $l$ is given in detail in the proof of Lemma~\ref{lemma:suff-sample}. Also, we show that when $(k,j)$ is in the sufficiently sampled regime, we can use Corollary~\ref{cor:inversion}. Now, the summation in the RHS of the last line above is equal to
\begin{align}
&\sum_{s=D+1}^T \ind{ i_s = (k,j), \npulls{k,j}{s-1} < l }
   + \sum_{s=D+1}^T \ind{ i_s = (k,j), \npulls{k,j}{s-1} \geq l } \nonumber \\
& \leq  l + \sum_{s=D+1}^T \ind{ i_s = (k,j); \npulls{k,j}{s-1} \geq l } \nonumber \\
& \leq  l + \sum_{s=D+1}^T
                      \ind{ \ghat{k}{j}{s-1} + \updev{k}{j}{s-1} \geq \ghat{k^*}{j^*}{s-1} + \updev{k^*}{j^*}{s-1};
                                \npulls{k,j}{s-1} \geq l } \nonumber \\
& \leq l + \sum_{s=D+1}^T
                     \ind{ \max_{l \leq p < s} \left\{ \hhat{k}{j}{p} + \confrad{k}{j}{p} \right\}
                              \geq  \min_{0 < m < s} \left\{ \hhat{k^*}{j^*}{m} + \confrad{k^*}{j^*}{m} \right\}
                            } \nonumber \\
& \leq l + \sum_{s=D+1}^T \sum_{m=1}^{s-1} \sum_{p=l}^{s-1}
                     \ind{ \hhat{k}{j}{p} + \confrad{k}{j}{p}  \geq  \hhat{k^*}{j^*}{m} + \confrad{k^*}{j^*}{m} } \nonumber \\
& \leq l + \sum_{s=1}^T \sum_{m=1}^{s-1} \sum_{p=l}^{s-1}
                     \ind{ \hhat{k}{j}{p} + \confrad{k}{j}{p}  \geq  \hhat{k^*}{j^*}{m} + \confrad{k^*}{j^*}{m} } \label{arm_pull_cond} .
\end{align}

Now, in \eqref{arm_pull_cond}, the inequality
$\hhat{k}{j}{p} + \confrad{k}{j}{p}  \geq  \hhat{k^*}{j^*}{m} + \confrad{k^*}{j^*}{m}$ is possible only when at least one of the following three inequalities is true:
% 
% Underestimating $\hat{g}_{(j^*,m)}$,
\begin{align}\label{underestimation}
\hhat{k^*}{j^*}{m} + \confrad{k^*}{j^*}{m} \leq \gtrue{k^*}{j^*} ;
\end{align}
% overestimating $\hat{g}_{(j,p)}$,
\begin{align}\label{overestimation}
\hhat{k}{j}{p} \geq \gtrue{k}{j} +\confrad{k}{j}{p} ;
\end{align}
% Avoiding, both over and underestimation to find the sufficient sampling $l$
\begin{align}\label{sufficient_sample}
\gtrue{k^*}{j^*} < \gtrue{k}{j} + 2 \confrad{k}{j}{p} .
\end{align}
Inequality \eqref{underestimation} corresponds to $\hhat{k^*}{j^*}{m}$ being a significant underestimate of the optimal ratio $\gtrue{k^*}{j^*}$,
while inequality \eqref{overestimation} corresponds to $\hhat{k}{j}{p}$ being a significant overestimate of the suboptimal ratio $\gtrue{k}{j}$.
Finally, inequality \eqref{sufficient_sample} will turn out to be false provided that $l$ is selected to be large enough, as then, from $p \geq l$, the quantity $\confrad{k}{j}{p}$ will be small enough to be strictly less than $\gap{k}{j}/2$. Refer to the proof of Lemma \ref{lemma:suff-sample} for the selection of $l$.

Taking the expectation on both sides of \eqref{arm_pull_cond}, we have
\begin{align}
\E[\npulls{k,s}{L}] &\leq l + \sum_{s=1}^T \sum_{m=1}^{s-1} \sum_{p=l}^{s-1} \Pr ( \hhat{k}{j}{p} + \confrad{k}{j}{p}  \geq  \hhat{k^*}{j^*}{m} + \confrad{k^*}{j^*}{m} ) \nonumber \\
& \leq l + \sum_{s=1}^T \sum_{m=1}^{s-1} \sum_{p=l}^{s-1}  \Pr ( \hhat{k}{j}{p} \geq \gtrue{k}{j}  + \confrad{k}{j}{p} ) + \Pr (  \hhat{k^*}{j^*}{m} \leq \gtrue{k^*}{j^*} - \confrad{k^*}{j^*}{m} ) \label{prob_arm_pull_cond} .
\end{align}
We bound the above probabilities using Corollary \ref{cor:inversion}; note that the time consumed by pulling an arm pair $(k,j)$ will be at most $j$, so $B$ in Corollary \ref{cor:inversion} becomes $j$ and $\epsilon$ becomes $\confrad{k}{j}{s} = \alpha_j \frac{\log s}{\npulls{k,j}{s}} + \beta_j \sqrt{\frac{\log s}{\npulls{k,j}{s}}}$.\\ For $\alpha_j = \frac{8(j-1)}{3}$, $ \beta_j = \sqrt{2}(\sqrt{j-1}+1)$ and by setting $l =  {\left[ \frac{\beta_j \sqrt{\log T} + \sqrt{\beta_j^2 \log T + 2 \gap{k}{j} \alpha_j \log T}}{\gap{k}{j}} \right]}^2 $, the above probabilities are bounded as
\begin{align} \label{bounding_prob}
&\Pr( \hhat{k}{j}{p} \geq \gtrue{k}{j}  + \confrad{k}{j}{p} ) \leq 4 \exp(-4 \log s) = 4 s^{-4} \nonumber ; \\
&\Pr( \hhat{k^*}{j^*}{m} \leq \gtrue{k}{j^*} - \confrad{k^*}{j^*}{m} ) \leq 4 s^{-4} .
\end{align}

We explain in detail how we chose $\alpha_j$, $\beta_j$, and $l$ in the proof of Lemma~\ref{lemma:suff-sample}. 

Using the above bounds in \eqref{prob_arm_pull_cond}, we have
\begin{align*}
\E[\npulls{k,j}{L}] 
&\leq l + 8 \sum_{s=1}^T \sum_{m=l}^{s-1} \sum_{p=l}^{s-1}  s^{-4} \\
&\leq l + 8 \sum_{s=1}^{T} s^{-2} \\
&\leq l + 8 \sum_{s=1}^{\infty} s^{-2} \\
&\leq l + 8 \cdot \frac{{\pi}^2}{6} \\
&= l + 4 \cdot \frac{{\pi}^2}{3} \\
& = \left( \frac{\sqrt{2}(\sqrt{j-1}+1) \sqrt{\log T})+ \sqrt{2(\sqrt{j-1}+1)^2 \log T + \frac{16}{3} \Delta_j (j-1) \log T }}{\Delta_j} \right)^2 + 4 \cdot \dfrac{{\pi}^2}{3} ,
\end{align*}
where the last line is from Lemma~\ref{lemma:suff-sample}, thus completing the proof.
\end{proof}

\section{Proof of Lemma~\ref{lemma:suff-sample}}
\label{app:proof-of-suff-sample-lemma}

In the following proof, since $j$ is fixed throughout, we simply write $\alpha$ and $\beta$ instead of $\alpha_j$ and $\beta_j$ (i.e.~we drop the subscripts).
\begin{proof}
We now find a value of $l$ that marks the sufficient sampling regime (this regime was described in the proof of Lemma~\ref{lemma:exp-sub-pulls}). We know that inequality \eqref{sufficient_sample} is only true for $p < l$. 
We will use this argument to find the lowerbound on $l$. 
Note that the value of $l$ is different for different pairs of arms; we used $l$ for notation consistency with $p$. 
The following three inequalities are equivalent:
\begin{align*}
\confrad{k}{j}{p} & < \frac{\gap{k}{j}}{2} ; \\
\alpha \frac{\log s}{p} + \beta \sqrt{\frac{\log s}{p}} &<  \frac{\gap{k}{j}}{2} ; \\
\alpha \log s + \beta \sqrt{p \log s} & < p \frac{\gap{k}{j}}{2} . \\
\end{align*}
The last line above is quadratic in $Z = \sqrt{p}$. 
Take 
$a = \frac{-\gap{k}{j}}{2}$, $
b = \beta \sqrt{\log s}$, 
and $c = \alpha \log s$ to be the coefficients for the quadratic form. Then we see that
\begin{align*}
Z < \frac{-\beta \sqrt{\log s} - \sqrt{\beta^2 \log s + 2 \gap{k}{j} \alpha \log s}}{-\gap{k}{j}} \\
%Z \leq \frac{\beta \sqrt{\log s} + \sqrt{\beta^2 \log s + 2 \gap{k}{j} \alpha \log s}}{\gap{k}{j}} \\
\end{align*}
and hence
\begin{align*}
p  < {\left[ \frac{\beta \sqrt{\log s} + \sqrt{\beta^2 \log s + 2 \gap{k}{j} \alpha \log s}}{\gap{k}{j}} \right]}^2 ,
\end{align*}
%Z < \frac{- \beta \sqrt{\log s} \pm \sqrt{ \beta^2 \log s + 2 \gap{k}{j} \alpha \log s}}{- \gap{k}{j}} \\
so, for $p \geq {\left[ \frac{\beta \sqrt{\log s} + \sqrt{\beta^2 \log s + 2 \gap{k}{j} \alpha \log s}}{\gap{k}{j}} \right]}^2$, inequality \eqref{sufficient_sample} is false. Therefore,
\begin{align*}
l \geq {\left[ \frac{\beta \sqrt{\log T} + \sqrt{\beta^2 \log T + 2 \gap{k}{j} \alpha \log T}}{\gap{k}{j}} \right]}^2 .
\end{align*}
To get the values of $\alpha_j$ and $\beta_j$ to bound the probabilities as in \eqref{bounding_prob}, 
we start by equating the probability quantity obtained from Corollary~\ref{cor:inversion} to $4 \exp( -4 \log s)$, 
so that
\begin{align*}
4 \exp( -4 \log s) &= 4 \exp \left( -\left[  \frac{-\frac{(\sqrt{j-1}+1)}{\sqrt{2n}}+ \sqrt{ \left(\frac{(\sqrt{j-1}+1)}{\sqrt{2n}} \right)^2 + \frac{8(j-1) \epsilon}{3 n}} }{\frac{4(j-1)}{3n}} \right]^2 \right) \\
4 \log s &= \left[  \frac{-\frac{(\sqrt{j-1}+1)}{\sqrt{2n}}+ \sqrt{ \left(\frac{(\sqrt{j-1}+1)}{\sqrt{2n}} \right)^2 + \frac{8(j-1) \epsilon}{3 n}} }{\frac{4(j-1)}{3n}} \right]^2 .
\end{align*}
For the equation \eqref{bounding_prob}, $\epsilon = \alpha \frac{\log s}{n} + \beta \sqrt{\frac{\log s}{n}}$. We simplify the above equation,
\begin{align*}
& -\frac{(\sqrt{j-1}+1)}{\sqrt{2n}}+ \sqrt{ \left(\frac{(\sqrt{j-1}+1)}{\sqrt{2n}} \right)^2 + \frac{8(j-1) \epsilon}{3 n}} = 2 \sqrt{\log s} \left( \frac{4(j-1)}{3n} \right) .
\end{align*}
Squaring both sides yields
\begin{align*} 
\left(\frac{(\sqrt{j-1}+1)}{\sqrt{2n}} \right)^2 + \frac{8(j-1) \epsilon}{3 n} = \left( 2 \sqrt{\log s} \left( \frac{4(j-1)}{3n} \right) + \frac{(\sqrt{j-1}+1)}{\sqrt{2n}} \right)^2 + \frac{(\sqrt{j-1}+1)^2}{2n} .
\end{align*}
Extracting the terms on the RHS, we have
\begin{align*}
& \frac{\left(\sqrt{j-1}+1\right)^2}{2} + \frac{8(j-1) \epsilon}{3} =  4 \log s \frac{16(j-1)^2}{9n} + (16 \sqrt{\log s}) \left( \frac{(j-1)}{3} \right) \left( \frac{\sqrt{j-1}+1}{\sqrt{2n}} \right) + \frac{(\sqrt{j-1}+1)^2}{2} .
\end{align*}
Substituting $\epsilon$ in the above equation yields
{\small
\begin{align*}\label{eqn:tuning_alpha_beta}
& \frac{\left(\sqrt{j-1}+1\right)^2}{2} + \frac{8(j-1) \{ \alpha \frac{\log s}{n} + \beta \sqrt{\frac{\log s}{n}} \}  }{3} =  4 \log s \frac{16(j-1)^2}{9n} + (16 \sqrt{\log s}) \left( \frac{(j-1)}{3} \right) \left( \frac{\sqrt{j-1}+1}{\sqrt{2n}} \right)+ \frac{(\sqrt{j-1}+1)^2}{2} .
\end{align*}
}
Simplifying the above equation, we get
\begin{align}
& \frac{\log s}{n} \left[ \frac{8(j-1)}{3} (\alpha) - \frac{64(j-1)^2}{9} \right] + \frac{\log s}{n} \left[\frac{8(j-1) (\beta)}{3} - \frac{16(j-1)}{3 \sqrt{2}}[\sqrt{j-1}+1] \right] = 0 .
\end{align}

The above equation \eqref{eqn:tuning_alpha_beta} is only true for $\alpha = \frac{8(j-1)}{3}$ and $\beta=\sqrt{2}(\sqrt{j-1}+1)$. This helps us bound $\E[\npulls{k,j}{s}]$.
\end{proof}

\section*{Proof of Theorem~\ref{regret}}
\begin{proof}
The proof mainly uses the Lemma~\ref{lemma:exp-sub-pulls}
\begin{align}
R_T & = T \cdot \gtrue{k^*}{j^*} - \E \left[ \sum_{s=1}^{L} r_{i_s}^{(s)} \right] + \mathcal{O}(1) \nonumber \\
& \leq T \cdot \gtrue{k^*}{j^*} - \sum_{(k,j)} \mu_{k,j}^{(r)} \cdot \E[\npulls{k,j}{L}] + \mathcal{O}(1) \nonumber \\
& = \gtrue{k^*}{j^*} \cdot (T - \sum_{(k,j) | \gap{k}{j} = 0} \mu_{k,j}^{(c)} \cdot \E[\npulls{k,j}{L}]) - \sum_{(k,j) | \gap{k}{j} > 0} \mu_{k,j}^{(r)} \cdot \E[\npulls{k,j}{L}]) \label{regretwitht}
\end{align}
Taking account of the random stopping time $L$, we can write,
\begin{align*}
T \leq \sum_{t=1}^{L} \cost{i_s}{s} .
\end{align*}
Taking expectations on both sides, we get
\begin{align}\label{sum_of_exp_cost}
T & \leq \sum_{k=1}^K \sum_{j=1}^D \mu_{k,j}^{(c)} \cdot \E[\npulls{k,j}{L}] \nonumber \\
& = \sum_{(k,j)| \gap{k}{j}=0} \mu_{k,j}^{(c)} \cdot \E[\npulls{k,j}{L}] + \sum_{(k,j)| \gap{k}{j} > 0} \mu_{k,j}^{(c)} \cdot \E[\npulls{k,j}{L}].
\end{align}
Substituting the above inequality \eqref{sum_of_exp_cost} in the regret bound \eqref{regretwitht}, we get
\begin{align*}
R_T &\leq \gtrue{k^*}{j^*} \cdot \left( \sum_{(k,j) | \gap{k}{j} > 0} \mu_{k,j}^{(c)} \cdot \E[\npulls{k,j}{L}] \right) - \sum_{(k,j) | \gap{k}{j} > 0} \mu_{k,j}^{(r)} \cdot \E[\npulls{k,j}{L}]) + \mathcal{O}(1) \\
& = \sum_{(k,j) | \gap{k}{j} > 0} \mu_{k,j}^{(c)} \left( \gtrue{k^*}{j^*}  - \gtrue{k}{j} \right)\E[\npulls{k,j}{L}] + \mathcal{O}(1) \\
& = \sum_{(k,j) | \gap{k}{j} > 0} \mu_{k,j}^{(c)} \left( \gap{k}{j} \right)\E[\npulls{k,j}{L}] + \mathcal{O}(1). \\
\end{align*}
%& = \sum_{(k,j) | \gap{k}{j} > 0} \left( \dfrac{r^*}{c^*} \cdot c_j - r_j \right)\E[\npulls{k,j}{L}] + \mathcal{O}(1) \\
From Lemma~\ref{lemma:exp-sub-pulls}, we have an upper bound on $\E[\npulls{k,j}{L}]$, and so the regret $R_T$ is at most
\begin{align*}
\sum_{(k,j) | \gap{k}{j} > 0} \mu_{k,j}^{(c)} \left(  \frac{ \left( \sqrt{2}(\sqrt{j-1}+1) \sqrt{\log T})+ \sqrt{2(\sqrt{j-1}+1)^2 \log T + \frac{16}{3} \gap{k}{j} (j-1) \log T } \right)^2 }{\gap{k}{j}}  + 4 (\dfrac{{\pi}^2}{3})\gap{k}{j}   \right) + \mathcal{O}(1)
\end{align*}
\NoEndMark
\end{proof}

\end{document}